\newcommand{\BlackBox}{\rule{1.5ex}{1.5ex}}  % end of proof
\newcommand{\qed}{}
\newenvironment{proof}{\par\noindent{\bf Proof\ }}{\hfill\BlackBox\\[2mm]}
\newtheorem{theorem}{Theorem}
\newtheorem{lemma}[theorem]{Lemma}
\newtheorem{claim}[theorem]{Claim}
\newcommand{\bA}{\mathbf{A}}
\newcommand{\bx}{\mathbf{x}}
\newcommand{\bI}{\mathbf{I}}
\newcommand{\bu}{\mathbf{u}}
\newcommand{\sY}{\mathcal{Y}}
\newcommand{\bw}{\mathbf{w}}
\newcommand{\bhatw}{\hat{\mathbf{w}}}
\newcommand{\bW}{\mathbf{W}}
\newcommand{\bv}{\mathbf{v}}
\newcommand{\bzero}{\mathbf{0}}
\newcommand{\sD}{\mathcal{D}}
\newcommand{\sL}{\mathcal{L}}
\newcommand{\sX}{\mathcal{X}}
\newcommand{\sF}{\mathcal{F}}
\newcommand{\sW}{\mathcal{W}}
\newcommand{\sV}{\mathcal{V}}
\newcommand{\sH}{\mathcal{H}}
\newcommand{\sG}{\mathcal{G}}
\newcommand{\field}[1]{\mathbb{#1}}
\newcommand{\R}{\field{R}}
\newcommand{\Var}{\mathrm{Var}}
\newcommand{\bvarepsilon}{\bm{\varepsilon}}
\newcommand{\sgn}{\mbox{\sc sgn}}
\newcommand{\scO}{\mathcal{O}}
\newcommand{\reals}{\mathbb{R}}
\newcommand{\bbeta}{\boldsymbol{\beta}}
\newcommand{\tp}{^{\top}}
\newcommand{\ip}[1]{\left\langle #1 \right\rangle}
\newcommand{\MEbr}[2]{\underset{#1}{\mathbb{E}}\left[ #2 \right]}
\newcommand{\Riskh}{\hat{R}_S}
\newcommand{\Risk}{R}
\newcommand{\Rsrc}{R^{\text{src}}}
\newcommand{\Rhat}{\hat{R}}
\newcommand{\Rad}{\mathfrak{R}}
\newcommand{\Radh}{\hat{\mathfrak{R}}_S}
\newcommand{\RadhNoS}[1]{\hat{\mathfrak{R}}_{#1}}
\newcommand{\distas}[1]{\mathbin{\overset{#1}{\kern\z@\sim}}}%
\newcommand{\tail}{\eta}
\DeclareMathOperator*{\esssup}{ess\,sup}
\DeclareMathOperator*{\E}{\mathbb{E}}
\DeclareMathOperator*{\argmin}{argmin}
\DeclareMathOperator*{\Prob}{\mathbb{P}}
\newcommand{\hsrc}{h^{\text{src}}}
\newcommand{\bhsrc}{\mathbf{h^{\text{src}}}}
\newcommand{\htrg}{h}
\newcommand{\trg}{^{\text{trg}}}
\newcommand{\src}{^{\text{src}}}
\newcommand{\regul}{\Omega}
\newenvironment{proof_custom}[1]{\par\noindent{\bf #1\ }}{\qed}
\newenvironment{nameddef}[1]{\\\par\noindent{\bf #1\ }}{\hfill\\}
\newcommand{\algname}{Regularized \ac{ERM}}
\newcommand{\algnamelong}{Regularized \acs{ERM}}
\title{Fast Rates by Transferring from Auxiliary Hypotheses}
\author{
Ilja Kuzborskij \\
Idiap Research Institute \\
Rue Marconi 19, Martigny, Switzerland \\
\texttt{ilja.kuzborskij@idiap.ch} \\
\and
Francesco Orabona \\
Yahoo! Labs \\
229 West 43rd Street, 10036 New York, NY, USA \\
\texttt{francesco@orabona.com} \\
}
\begin{document}

\maketitle
%
% \section{TODO}
% \begin{enumerate}
% \item Now $\bw$ and $\bbeta$ have the same strongly-convex regularizer $\Omega$.
% Split, so they have different ones?
% \item target/source ``domain'' and ``task'': now used both. Stick to either ``domain'' or ``task'' consistently?
% \end{enumerate}
%
\begin{abstract}
In this work we consider the learning setting where, in addition to the training set, the learner receives a collection of auxiliary hypotheses originating from other tasks.
%This paradigm, known as \ac{HTL}, has been successfully exploited in empirical works, but only recently has received a theoretical attention.
%Here, we try to understand when the auxiliary hypotheses facilitate the learning process.
%-- the goal of the transfer learning paradigm.
We focus on a broad class of ERM-based linear algorithms
%, a \algnamelong,
that can be instantiated with any non-negative smooth loss function and any strongly convex regularizer.
We establish generalization and excess risk bounds, showing that,
if the algorithm is fed with a good combination of source hypotheses, generalization happens at the fast rate $\scO(1/m)$ instead of the usual $\scO(1/\sqrt{m})$.
%We also observe that if the combination is perfect, our theory formally backs up the intuition that learning is not necessary.
On the other hand, if the source hypotheses combination is a misfit for the target task, we recover the usual learning rate.
As a byproduct of our study, we also prove a new bound on the Rademacher complexity of the smooth loss class under weaker assumptions compared to previous works.

\end{abstract}
%
% \begin{keywords}
% transfer learning, domain adaptation, fast-rate generalization bounds, Rademacher complexity, smooth loss functions, strongly-convex regularizers
% \end{keywords}
%
\section{Introduction}
In the standard supervised machine learning setting the learner receives a set of labeled examples, known as the training set.
However, very often we have additional information at hand that could be beneficial to the learning process.
One such example is the use of unlabeled data drawn from the marginal distributions, that gives rise to the semi-supervised learning setting~\citep{chapelle2006semi}.
Another example is when the training data is coming from a related problem, as in multi-task learning~\citep{caruana1998multitask}, domain adaptation~\citep{ben2010theory,mansour2009domain_bounds}, and transfer learning~\citep{pan2010survey,taylor2009transfer}.
Among others, there is the use of structural information, such as taxonomy, different views on the same data~\citep{blum1998combining}, or even a sort of privileged information~\citep{vapnik2009new,sharmanska2013learning}.
In the recent years all these directions have received a considerable empirical and theoretical attention.

In this work we focus on a less theoretically studied direction in the use of supplementary information -- learning with \emph{auxiliary hypotheses}, that is classifiers or regressors originating from another tasks.
%More in details,
In particular, in addition to the training set we assume that the learner is supplied with a collection of hypotheses 
and their predictions on the training set itself. 
The goal of the learner is to figure out which hypotheses are helpful and use them to improve the prediction performance of the trained classifier.
We will call these auxiliary hypotheses the \emph{source} hypotheses and we will say that helpful ones accelerate the learning on the \emph{target} task.
We focus on the linear setting, that is, we train a linear\footnote{Non-linear classifiers can be easily produced with the use of kernels.} classifier and the source hypotheses are used additively in the prediction process, weighted by arbitrary weights.
This generalizes the setting in which the outputs of the source hypotheses are concatenated with the feature vector, a widely used heuristic~\citep{classemes, li2010object, tommasi2013learning}.
%We do not analyze a particular algorithm for learning weight combination, but rather focus on the quantity that captures the quality of this combination.
%Thus our analysis accomodates variety of settings: from simply treating source hypothesis predictions as a subspace of an input space in a standard learning problem, to treating them independently

The scenario described above is related to the \ac{TL} and \ac{DA} ones, or learning effectively from possibly small amount of data by reusing the prior knowledge~\citep{thrun1998learning,pan2010survey,taylor2009transfer,ben2010theory}.
However, transferring from hypotheses offers an advantage compared to \ac{TL} and \ac{DA} frameworks, where one requires access to the data of the \emph{source} domain.
For example, in \ac{DA}~\citep{ben2010theory}, one employs large unlabeled samples to estimate the relatedness of source and target domains to perform the adaptation.
Even if unlabeled data are abundant, the estimation of adaptation parameters can be computationally prohibitive.
This is the case, for example, when a large number of domains is involved or when one acquires new domains incrementally.
%There, keeping unlabeled data from all the domains and re-estimating the parameters is a necessity.

A recently proposed setting, closer to the one we consider, is the \ac{HTL} \citep{kuzborskij2013stability,Ben-DavidU13}, where the practical limitations of \ac{TL} and \ac{DA} are alleviated through indirect access to the \emph{source domain} by means of a \emph{source hypothesis}. Also, in the \ac{HTL} setting there are no restrictions on how the source hypotheses can be used to boost the performance on the target task.

%We show conditions on the source hypotheses that can give \emph{fast} rates of convergence of the target risk, $\sc{O}(1/m)$, instead of the usual $\sc{O}(1/\sqrt{m})$.

%The use of source hypotheses to accelerate the learning of a target task is part of the \emph{\acf{HTL}} framework~\citep{kuzborskij2013stability}.
Albeit empirically the setting considered in this paper has already been extensively exploited in the past~\citep{yang2007cross,orabona2009model,tommasi2010safety,jie2011multiclass,kuzborskij2013from},
a first theoretical treatment of this setting was given by~\cite{kuzborskij2013stability}, 
where we analyzed the linear \ac{HTL} algorithm that solves a regularized least-squares problem with a single fixed, unweighted, source hypothesis.
We proved a polynomial generalization bound that depends on the performance of the fixed source hypothesis on the target task.
%The attractive quality of the \ac{HTL} framework is the fact that we assume very little about the source hypotheses.
%, that is, we do not need any knowledge about the tasks they come from, or any information about the model or the way they were trained.
%In fact, we treat each hypothesis as a ``black box''.
%Our analysis backs up the success of this special case of \ac{HTL}: when the source hypotheses can be combined to help on the target task, one might anticipate a \emph{fast} rate of generalization, $\sc{O}(1/m)$, instead of usual $\sc{O}(1/\sqrt{m})$.

%In this work we considerably extend and generalize the theory of \ac{HTL}.\\
%In this work we considerably extend the theory of \ac{HTL}, with the focus on the fast rates of convergence.\\
\textbf{Our contributions.}
We extend the formulation in \citep{kuzborskij2013stability}, with a general regularized Empirical Risk Minimization (ERM) problem with respect to any non-negative smooth loss function, not necessarily convex, and any strongly convex regularizer.
We prove high-probability generalization bounds that exhibit \emph{fast rate}, i.e. $\scO(1/m)$, of convergence whenever any \emph{weighted combination} of multiple source hypotheses performs well on the target task.
%In other words, we show theoretically that \ac{HTL} with the smooth loss facilitates a faster generalization.
In addition, we show that, if the combination is perfect, the error on the training set becomes deterministically equal to the generalization error.
Furthermore, we analyze the excess risk of our formulation, and conclude that a good source hypothesis also speeds up the convergence to the performance of the best-in-the-class.
As a byproduct of our study, we prove an upper bound on the Rademacher complexity of a smooth loss class that provides extra information compared to that of Lipschitz loss classes.
Our analysis, that might be of independent interest, is an alternative to the analysis of~\cite{SrebroST10} and it holds under much weaker assumptions.

The rest of the paper is organized as follows.
In the next section we make a brief review of the previous work.
Next, we formally state our formulation in Section~\ref{sec:problem} and present main results right after, in Section~\ref{sec:guarantees}.
In Section~\ref{sec:implications} we discuss the implications and compare them to the body of literature in learning with the fast rates and transfer learning.
Next, in Section~\ref{sec:technical_results}, we present the proofs of our main results. Section~\ref{sec:conc} concludes the paper.

%
%There, on the technical side, results were shown using the framework of algorithmic stability~\citep{BousquetE02}.
% In this work we follow an alternative technical route, applying Rademacher complexity analysis along with variance-dependent concentration inequalities~\citep{bousquet2002concentration}.
%
% contributions:
% fast rate bounds
% vanishing behaviour
% combination of multiple source hypotheses
% any non-negative smooth loss function
% any strongly convex regularizer
% high probability generalization bounds and excess risk analysis
% Rademacher complexity analysis for smooth losses different from Srebro, can be of independent interest

%
\section{Related Work}
%
%The framework of \acf{HTL} that we address in this paper was first formally introduced and studied theoretically by~\cite{kuzborskij2013stability}.
\cite{kuzborskij2013stability} showed that the generalization ability of the regularized least-squares \ac{HTL} algorithm improves if the supplied \emph{source} hypothesis performs well on the target task.
More specifically, we proposed a key criterion, \emph{the risk of the source hypothesis on the target domain}, that captures the relatedness of the source and target domains.
%These results also implied that given a perfect source hypothesis, no learning is necessary.
%
Later,~\citet{Ben-DavidU13} showed a similar bound, but with a different quantity capturing the relatedness between source and target.
Instead of considering a general source hypothesis, they have confined their analysis to the linear hypothesis class.
This allowed them to show that the target hypothesis generalizes better when it is close to the good source hypothesis.
From this perspective it is easy to interpret the source hypothesis as an initialization point in the hypothesis class.
Naturally, given a starting position that is close to the best in the class, one generalizes well.

Prior to these works there were few studies trying to understand the learning with auxiliary hypotheses subject to different conditions.
\cite{li2007bayesian} have analyzed a Bayesian approach to \ac{HTL}. %~\citep{scholkopf2001generalized}, that is a special case of \ac{HTL}.
Employing a PAC-Bayes analysis they showed that given a prior on the hypothesis class, the generalization ability of logistic regression improves if the prior is informative on the target task.
% with biased regularization improves if the prior is informative on the target task.
%Later,
\cite{mansour2009multiple} analyzed a setting of \emph{multiple source hypotheses} combination.
There, in addition to the source hypotheses, the learner receives unlabeled samples drawn from the source distributions, that are used to weight and combine these source hypotheses.
They have studied the possibility of learning in such a scenario, however, they did not address the generalization properties of any particular algorithm.

Unlike these works, we focus on the generalization ability of a large family of \ac{HTL} algorithms, that
generate the target predictor given a set of multiple source hypotheses.
In particular, we analyze Regularized \acl{ERM} with the choice of any non-negative smooth loss and any strongly convex regularizer.
Thus our analysis covers a wide range of algorithms, explaining their empirical success.
One category of those, prevalent in computer vision~\citep{kienzle2006personalized,yang2007cross,tommasi2010safety,aytar2011tabula,kuzborskij2013from,tommasi2013learning}, employs the principle of biased regularization~\citep{scholkopf2001generalized}.
For example, instead of penalizing large weights by introducing the term $\|\bw\|^2$ into the objective function, one enforces them to be close to some ``prior'' model, that is $\|\bw - \bw^{\text{prior}}\|^2$.
This principle also found its applications in other fields, such as NLP~\citep{daume2007frustratingly,daume2010frustratingly}, and electromyography classification~\citep{orabona2009model,tommasi2013improving}.
Many empirical works have also investigated the use of the source hypotheses in a ``black box'' sense, sometimes not even posing the problem as a transfer learning~\citep{duan2009domain,li2010object,jie2011multiclass,classemes},
and recently in conjunction with deep neural networks~\citep{oquab2014learning}.

In the literature there are several other machine learning directions conceptually similar to the one we consider in this work.
Arguably, the most well known one is the \acf{DA} problem.
The standard machine learning assumption is that the training and the testing sets are sampled from the same probability distribution.
In such case, we expect that a hypothesis generated by the learner from that training set will lead to sensible predictions on the testing set.
The difficulty arises when training and testing distributions differ, that is we have a training set sampled from the \emph{source domain} and testing set from the \emph{target domain}.
Clearly, the hypothesis generated from the source domain can perform arbitrarily badly on the target one.
A paradigm of \ac{DA}, addressing this issue has received a lot of attention in recent years~\citep{ben2010theory,mansour2009domain_bounds}.
Although, this framework is different from the one we study in this work, we identify similarities and compare our findings with the theory of learning from different domains in Section~\ref{sec:implications_da}.

\section{Definitions}
\label{sec:definition}

% In the following, we denote with small and capital bold letters respectively
% column vectors and matrices, e.g. $\balpha=[\alpha_1, \alpha_2, \ldots, \alpha_d]^T\in \R^d~$
% and $\bA \in \R^{d_1 \times d_2 }~$.
% with $A_{ji}$ corresponding to the $(j,i)$ element.
%When only one subscripted index is present, it represents the column index:
%e.g. $\boldsymbol{A}_i$ is the $i$-th column of the matrix
%$\boldsymbol{A}$.
In this section we introduce the definitions used in the rest of the paper.

We denote random variables by capital letters.
The expected value of a random variable distributed according to a probability
distribution $\sD$ is denoted by $\E_{X \sim \sD}[X]$ and the variance is denoted by $\Var_{X \sim \sD}[X]$.
The small and capital bold letters will stand respectively for the vectors and matrices, e.g. $\bx = [x_1, \ldots, x_d]\tp$ and $\bA \in \R^{d_1 \times d_2 }~$.
%
%
% Denoting by $\sX$ and $\sY$ respectively the input and output space of the learning problem,
% the training set $S$ is defined as $\{(\bx_i,y_i)\}_{i=1}^m$, drawn i.i.d. from $\sX \times \sY$ according to the probability distribution $\sD$.
% We also assume that $B_\sY = |\min\{\sY\}| = |\max\{\sY\}|$.
% The marginal distribution over the input space is denoted as $p_\sX$. We also define a supervised learning algorithm as follows.
%
%The \ac{LOO} training set is defined as $\Sloo=\{(\bx_j,y_j) :~1 \leq j \leq i-1 \text{ and } i+1 \leq j \leq m\}$ and hypothesis $h_{\Sloo}$ is produced by an algorithm $A$ given training set $\Sloo$.

Denoting by $\sX$ and $\sY$ respectively the input and output space of the learning problem,
the training set is $S=\{(\bx_i,y_i)\}_{i=1}^m$, drawn i.i.d. from the probability distribution $\sD$ defined over $\sX \times \sY$.
Without the loss of generality we will have $\sX = \{\bx : \|\bx\| \leq 1\}$ and we will focus on the problems where $\sY = [-C, C]$.
%, where $\|\cdot\|$ is the norm depending on the choice of the regularizer.
%\textbf{FIXME: Very unclear: which regularizer? why there is a star in the norm? the reader does not have all these information. Do we really need to introduce this concept here?}
%The truncation of a scalar $a$ to the range $[-C, C]$ is defined as $\T{a} := \min\{\max\{a, -C\}, C\}$~.

%We will say that the learning algorithm generates the hypothesis $h$, given the training set $S$.
To measure the accuracy of a learning algorithm, we introduce a non-negative loss function $\ell(h(\bx), y)$, which measures the cost incurred predicting $h(\bx)$ instead of $y$.
The \emph{risk} of a hypothesis $h$, with respect to a probability distribution $\sD$,
and the \emph{empirical risk} measured on the sample $S$
are then defined as
\[
\Risk(h) := \E_{(\bx,y) \sim \sD}[\ell(h(\bx), y)],\quad \text{ and } \quad\Riskh(h) := \frac{1}{m} \sum_{i=1}^m \ell(h(\bx_i), y_i).
\]
%while the
%\emph{empirical risk} measured on the sample $S$ is
%\[
%\Riskh(h) := \frac{1}{m} \sum_{i=1}^m \ell(h(\bx_i), y_i)~.
%\]
In the following, the risk is measured with respect to the probability distribution of the \emph{target} domain, unless stated otherwise.
We capture the smoothness of the loss function via following definition.
\begin{nameddef}{$H$-smooth loss function.}
We say that a non-negative loss function $\ell : \sY \times \sY \mapsto \reals_+$ is $H$\emph{-smooth} iff,
\[
\forall t,r \in \reals, \forall y \in \sY, ~ |\nabla_t \ell(t, y) - \nabla_r \ell(r, y)| \leq H |t - r|.
\]
\end{nameddef}
In this work we will make use of strongly convex regularizers, functions that are defined as follows.
\begin{nameddef}{Strongly convex function.}
A function $\regul$ is $\sigma$-strongly convex w.r.t. a norm $\|\cdot\|$ iff for all $\bw, \bv$, and $\alpha \in (0, 1)$ we have
\[
\regul(\alpha \bw + (1 - \alpha) \bv) \leq \alpha \regul(\bw) + (1-\alpha) \regul(\bv) - \frac{\sigma}{2} \alpha (1-\alpha) \|\bw - \bv\|^2.
\]
\end{nameddef}
%
%Now we introduce the measure of a complexity of a hypothesis class.
We will quantify the complexity of a hypothesis class by the means of Rademacher complexity~\citep{BartlettM03}.
In particular, the empirical Rademacher complexity of the hypothesis class $\sH$ measured on the sample $S$
and its expectation are defined as
\[
\Radh(\sH) := \E_{\bvarepsilon} \left[ \sup_{h \in \sH} \frac{1}{m} \sum_{i=1}^m \varepsilon_i h(\bx_i) \right] \quad \text{ and } \quad
\Rad(\sH) := \E_S\left[ \Radh(\sH) \right].
\]
Here, $\varepsilon_i$ is a random variable such that $\mathbb{P}(\varepsilon_i=1) = \mathbb{P}(\varepsilon_i=-1) = \frac{1}{2}$.
%
%The Rademacher complexity is then defined as,
%\[
%\Rad(\sH) := \E_S\left[ \Radh(\sH) \right]~.
%\]
Similarly, as in the case of the risk, the Rademacher complexity is measured with respect to the probability distribution of the target domain, unless stated otherwise.

\section{Transferring from Auxiliary Hypotheses}
\label{sec:problem}
%
%In the following we will try to capture and generalize transfer learning problems that employ a collection of hypotheses as a prior knowledge, which originate from the different tasks.
In the following we will capture and generalize many transfer learning formulations that employ a collection of given \emph{source hypotheses} $\{\hsrc_i : \sX \mapsto \sY\}_{i=1}^n$ within the framework of Regularized \acf{ERM}.
%, some of which might be related to the task of our interest.
%We will refer to the set of given hypotheses as \emph{source hypotheses}.
These problems typically involve a criterion for source hypothesis selection and combination with the goal to increase performance on the \emph{target task}~\citep{yang2007cross,tommasi2013learning,kuzborskij2015transfer}.
Indeed, some source hypotheses might come from tasks similar to the target task and the goal of an
%transfer learning
algorithm is to select only relevant ones.
In this work we will consider source combination
\begin{equation*}
  \hsrc_{\bbeta}(\bx) := \sum_{i=1}^n \beta_i \hsrc_i(\bx),
\end{equation*}
and target hypothesis
\begin{equation}
  \label{eq:htl_predictor}
  h_{\bw, \bbeta}(\bx) : = \ip{\bw,\bx} + \hsrc_{\bbeta}(\bx),
\end{equation}
with the relevance of the sources characterized by the parameter $\bbeta \in \reals^n$.
We will focus on the Regularized \ac{ERM} formulations
%of such transfer learning algorithms
with the choice of any non-negative smooth loss function and any strongly-convex regularizer.
This puts our problem into the class of the ones that can be solved efficiently, yet endowed with interesting properties.
\begin{nameddef}{Regularized ERM for Transferring from Auxiliary Hypotheses.}
  \label{alg:htl_erm}
  Let $\ell : \sY \times \sY \mapsto \reals_+$ be an $H$-smooth loss function and
  let $\regul : \sH \mapsto \reals_+$ be a $\sigma$-strongly convex function w.r.t. a norm $\|\cdot\|$.
  Given the target training set $S = \{(\bx_i, y_i)\}_{i=1}^m$, $\lambda \in \reals_+$, source hypotheses $\{\hsrc_i\}_{i=1}^n$,
  %parameters $\bbeta \in \{ \bx \ : \ \bx \in \reals^n \ \wedge \ \regul(\bx) \leq \rho \}$ and $\lambda \in \reals_+$,
  and parameters $\bbeta$ obeying $\Omega(\bbeta) \leq \rho$,
  the algorithm generates the \emph{target hypothesis} $\htrg_{\bhatw, \bbeta}$, such that
  \begin{equation}
    \label{eq:htl_erm}
    \bhatw = \argmin_{\bw \in \sH}\left\{\frac{1}{m} \sum_{i=1}^m \ell\left(\ip{\bw, \bx_i} + \hsrc_{\bbeta}(\bx), y_i\right) + \lambda \regul(\bw)\right\}.
  \end{equation}
\end{nameddef}
Note that~\eqref{eq:htl_erm} is minimized only w.r.t. $\bw$, that is, we do not analyze any particular algorithm that searches for the optimal weights of the source hypotheses.
However, we assume that $\Omega(\bbeta) \leq \rho$, that is we constrain $\bbeta$ through a strongly convex function.
Thus, we cover regularized algorithms generating $\bbeta$, which includes most of the empirical work in this field, and potential new algorithms.

In the following we will pay special attention to a quantity that captures the performance of the source hypothesis combination $\hsrc_{\bbeta}(\bx)$ on the target domain
\[
\Rsrc := R(\hsrc_{\bbeta}).
\]
Our analysis will focus on the generalization properties of $\htrg_{\bhatw, \bbeta}$.
%, that is the solution to this \acs{HTL} problem.
In particular, our main goal will be to understand the impact of the source hypothesis combination on the performance of the target hypothesis.
In our analysis we will discuss various regimes of interest, for example considering the perfect and arbitrarily bad source hypothesis.
Our discussion will touch scenarios where the auxiliary hypotheses accelerate the learning and the conditions when we can provably expect perfect generalization.
Finally, we will consider the consistency of the algorithm \eqref{eq:htl_predictor} and pinpoint conditions when we achieve faster convergence to the performance of the best-in-the-class.

%\subsection{Biased Regularization}
%
One special example covered by our analysis, commonly applied in transfer learning, is the \emph{biased regularization}~\citep{scholkopf2001generalized}.
Consider the following least-squares based algorithm.
%% \begin{definition}
%%   \label{alg:htl_erm_biased_regul}
%%   Let $\ell : \sY \times \sY \mapsto \reals_+$  be the loss function and let $\regul : \sH \mapsto \reals_+$ be the regularizer.
%%   Given the target training set $S = \{(\bx_i, y_i)\}_{i=1}^m$, source hypotheses $\{\bw\src_i\}_{i=1}^n \subset \sH$, parameters $\bbeta \in \reals^n$ and $\lambda \in \reals_+$,
%%   the algorithm generates the \emph{target hypothesis} $h(\bx) = \ip{\bhatw, \bx}$, such that,
%%   \begin{equation}
%%     \label{eq:erm_biased_regul}
%%     \bhatw = \argmin_{\bw \in \sH}\left\{\frac{1}{m} \sum_{i=1}^m \ell\left(\ip{\bw, \bx_i}, y_i\right) + \lambda \regul\left(\bw - \sum_{i=1}^n \beta_i \bw\src_i\right)\right\}~.
%%   \end{equation}
%% \end{definition}
\begin{nameddef}{Least-Squares with Biased Regularization.}
  \label{alg:htl_erm_biased_regul}
  Given the target training set $S = \{(\bx_i, y_i)\}_{i=1}^m$, source hypotheses $\{\bw\src_i\}_{i=1}^n \subset \sH$, parameters $\bbeta \in \reals^n$ and $\lambda \in \reals_+$,
  the algorithm generates the target hypothesis $h(\bx) = \ip{\bhatw, \bx}$, where
  \begin{equation}
    \label{eq:erm_biased_regul}
    \bhatw = \argmin_{\bw \in \sH}\left\{\frac{1}{m} \sum_{i=1}^m \left(\ip{\bw, \bx_i} - y_i\right)^2 + \lambda \left\|\bw - \bW\src\bbeta \right\|_2^2\right\}.
  \end{equation}
\end{nameddef}
This problem has a simple intuitive interpretation: minimize the training error on the target training set while keeping the solution close to the linear combination of the source hypotheses.
One can naturally arrive at~\eqref{eq:erm_biased_regul} from a probabilistic perspective:
The solution $\bhatw$ is a maximum a posteriori estimate when the conditional distribution is Gaussian and the prior is a $\bW\src \bbeta$-mean, $\frac{1}{\lambda} \bI$-covariance Gaussian distribution.
%Similar reasoning can be applied to other distributions from the exponential family, giving rise to different regularizers.
%
Even though biased regularization is a simple idea, it found success in a plethora of transfer learning applications,
ranging from computer vision~\citep{kienzle2006personalized,yang2007cross,tommasi2010safety,aytar2011tabula,kuzborskij2013from,tommasi2013learning} to NLP~\citep{daume2007frustratingly}, to electromyography classification~\citep{orabona2009model,tommasi2013improving}.

\begin{claim}
Least-Squares with Biased Regularization is a special case of the Regularized \ac{ERM} in \eqref{eq:htl_predictor}.
\end{claim}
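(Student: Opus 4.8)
The plan is to exhibit an explicit instantiation of the general Regularized \ac{ERM} in \eqref{eq:htl_erm} that reproduces \eqref{eq:erm_biased_regul}, and then to relate the two minimization problems by an affine change of variable. Concretely, I would take the square loss $\ell(t,y) = (t-y)^2$, the regularizer $\regul(\bw) = \|\bw\|_2^2$, and linear source hypotheses $\hsrc_j(\bx) := \ip{\bw\src_j, \bx}$ for $j = 1, \dots, n$. With these choices the source combination is $\hsrc_{\bbeta}(\bx) = \sum_{j=1}^n \beta_j \ip{\bw\src_j, \bx} = \ip{\bW\src\bbeta, \bx}$, where $\bW\src$ is the operator whose columns are the $\bw\src_j$.

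First I would verify that these choices are admissible, i.e.\ that they meet the standing assumptions of the general algorithm. Since $\nabla_t \ell(t,y) = 2(t-y)$, we have $|\nabla_t\ell(t,y) - \nabla_r\ell(r,y)| = 2|t-r|$, so $\ell$ is $2$-smooth and non-negative. The regularizer $\|\cdot\|_2^2$ has Hessian $2\bI$ and is hence $2$-strongly convex with respect to $\|\cdot\|_2$, and is clearly non-negative. The side constraint $\regul(\bbeta) \le \rho$ is immaterial because $\bbeta$ is a fixed input in both formulations; any $\rho \ge \|\bbeta\|_2^2$ suffices.

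Next I would carry out the substitution $\bv := \bw - \bW\src\bbeta$, i.e.\ $\bw = \bv + \bW\src\bbeta$. Using $\ip{\bw, \bx_i} = \ip{\bv, \bx_i} + \ip{\bW\src\bbeta, \bx_i} = \ip{\bv, \bx_i} + \hsrc_{\bbeta}(\bx_i)$, the data-fit term of \eqref{eq:erm_biased_regul} becomes $\frac{1}{m}\sum_{i=1}^m \big(\ip{\bv, \bx_i} + \hsrc_{\bbeta}(\bx_i) - y_i\big)^2 = \frac{1}{m}\sum_{i=1}^m \ell\big(\ip{\bv, \bx_i} + \hsrc_{\bbeta}(\bx_i), y_i\big)$, while the penalty becomes $\lambda\|\bv\|_2^2 = \lambda\regul(\bv)$. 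Thus the biased-regularization objective evaluated at $\bw$ equals the objective of \eqref{eq:htl_erm} evaluated at $\bv = \bw - \bW\src\bbeta$.

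The final step is to transport this identity to the minimizers and the returned predictors. As $\bw \mapsto \bw - \bW\src\bbeta$ is an affine bijection of $\sH$, it sends the minimizer $\bhatw$ of \eqref{eq:erm_biased_regul} to the minimizer $\bhatv = \bhatw - \bW\src\bbeta$ of \eqref{eq:htl_erm}. The predictor produced by the general algorithm is then, by \eqref{eq:htl_predictor}, $h_{\bhatv, \bbeta}(\bx) = \ip{\bhatv, \bx} + \hsrc_{\bbeta}(\bx) = \ip{\bhatw - \bW\src\bbeta, \bx} + \ip{\bW\src\bbeta, \bx} = \ip{\bhatw, \bx}$, which is exactly the hypothesis returned by Least-Squares with Biased Regularization. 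The computation is entirely routine; the only point requiring care is that the optimization variable of the general framework plays the role of the offset $\bw - \bW\src\bbeta$ rather than of $\bw$ itself, so the equivalence is most cleanly stated at the level of predictors after undoing the change of variable.
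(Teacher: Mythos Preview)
Your proof is correct and follows essentially the same approach as the paper: both reduce the biased-regularization problem to the general Regularized ERM via the affine change of variable $\bv = \bw - \bW\src\bbeta$ (the paper's $\bw'$), with the square loss, the $\|\cdot\|_2^2$ regularizer, and linear source hypotheses $\hsrc_j(\bx) = \ip{\bw\src_j, \bx}$. Your version is simply more thorough, explicitly checking smoothness and strong convexity and verifying that the returned predictors coincide, whereas the paper states the substitution and identifies the resulting problem with \eqref{eq:htl_erm} without these additional checks.
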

\begin{proof}
  Introduce $\bw'$, such that $\bw' = \bw - \bW\src\bbeta$.
  Then we have that problem~\eqref{eq:erm_biased_regul} is equivalent to
  \begin{align*}
    %&\min_{\bw \in \sH}\left\{\frac{1}{m} \sum_{i=1}^m \left(\ip{\bw, \bx_i} - y_i\right)^2 + \lambda \left\|\bw - \bW\src\bbeta \right\|_2^2\right\}
    \min_{\bw \in \sH}\left\{\frac{1}{m} \sum_{i=1}^m \left(\ip{\bw' + \bW\src\bbeta, \bx_i} - y_i\right)^2 + \lambda \left\|\bw' \right\|_2^2\right\},
  \end{align*}
  that in turn is a special version of~\eqref{eq:htl_erm} when $\hsrc_i(\bx) = \ip{\bw\src_i, \bx}$, we use the square loss, and $\|\cdot\|_2^2$ as regularizer.
\qed\end{proof}

Albeit practically appealing, the formulation~\eqref{eq:erm_biased_regul} is limited in the fact that the source hypotheses must be a linear predictor living in the same space of the target predictor.
%This is a practically restrictive assumption since the source hypotheses can belong to a much richer class, such as~\acl{RKHS}, while the target hypothesis might be confined to a Euclidean space.
Instead, the formulation in \eqref{eq:htl_predictor} naturally generalizes the biased regularization formulation, allowing 
to treat the source hypothesis as ``black box'' predictors.
%A first theoretical treatment on the generalization ability of the algorithms of this kind, the \acf{HTL} algorithms, was given by~\cite{kuzborskij2013stability} and later extended by~\cite{Ben-DavidU13}.
%Here we further generalize the problem, following an alternative path in our analysis, arriving at stronger and extended results.
%Second, by relaxing this assumption we simplify transfer learning scheme
%
%
%% In the following we will focus on the \ac{HTL} via regularized \ac{ERM} problem.
%% We will assume that apart from the usual training set and the regularization parameter, the algorithm is also supplied with the \emph{source} hypothesis, that is the classifier or regressor,
%% trained on the different, but possibly related task.
%% Notably, the only assumption on the source hypothesis we make, is that it is defined over the same input and output space as the our target problem.
%% In other words we treat the source hypothesis as a black box, giving no concern about its training procedure or inner workings.
%% Throughout this section we will assume that we are given a \emph{fixed} source hypothsis $\hsrc$, and we will consider the target hypothesis of a form,
%Next, we formulate the general \ac{HTL} problem that we will consider in the rest of the paper.
%
%
%Now we formulate a problem that generalizes the biased regularization approach for transfer learning.
%

%
\section{Main Results}
\label{sec:guarantees}
In this section, we present the main results of this work: generalization and excess risk bounds for the \algnamelong.
In the next section we discuss in detail the implications of these results, while
we defer the proofs to the subsequent sections.

%In the following we present two generalization bounds specific to the transfer learning setting when one has a linear combination of $n$ source hypotheses
%\[
%\hsrc_{\bbeta}(\bx) = \sum_{i=1}^n \beta_i \hsrc_i(\bx)~.
%\]
The first bound demonstrates the utility of the perfect combination of source hypotheses, while the second lets us observe the dependency on the arbitrary combination.
In particular, the first bound explicitates the intuition that given the perfect source hypothesis learning is not required.
%\textbf{FIXME: no idea what the following sentence means...}
%Indeed, from~\eqref{eq:gen_bound_vanishing} we observe that target hypothesis shows us the true error on the training set if source hypothesis combination has zero error on the target domain.
%
In other words, when $\Rsrc=0$ we have that the empirical risk becomes equal to the risk with probability one.
%We achieve this by making the confidence term dependent on the risk of the source hypothesis through the use of the Bennett-type functional concentration inequality.
%
%In addition, we present the second bound, implying a fast rate of the empirical risk convergence, subject to the quality of the source hypothesis combination.
%By fast rate here we mean the rate of convergence $\scO\left(1/m\right)$ instead of usual $\scO\left(1/\sqrt{m}\right)$.
%This bound is similar in spirit to the results of localized complexities, as in works of~\citet{bartlett2005local,SrebroST10}, however, we focus on the \ac{HTL} scenario rather than a generic learning setting.
%Later, in Section~\ref{sec:implications}, we compare our bounds to these works and show that our analysis achieves superior results.
%
%
\begin{theorem}
\label{thm:htl_gen_bound}
Let $\htrg_{\bhatw, \bbeta}$ be generated by \algname, given the $m$-sized training set $S$ sampled i.i.d. from the target domain, source hypotheses $\{\hsrc_i : \|\hsrc_i\|_\infty \leq 1 \}_{i=1}^n$, any source weights $\bbeta$ obeying $\regul(\bbeta) \leq \rho$, and $\lambda \in \reals_+$.
%Let the combined source hypothesis be $\hsrc_{\bbeta}(\bx) = \sum_{i=1}^n \beta_i \hsrc_i(\bx)$, and denote its risk on the target domain by $\Rsrc$.
%Denote the risk of a combined source hypothesis $\hsrc_{\bbeta}$ on the target domain by $\Rsrc$.
Assume that $\ell(\htrg_{\bhatw, \bbeta}(\bx), y) \leq M$ for any $(\bx, y)$ and any training set.
Then, denoting $\kappa = \frac{H}{\sigma}$ and assuming that $\lambda \leq \kappa$,
we have with probability at least $1 - e^{-\tail}, \ \forall \tail \geq 0$
\begin{align}
  \Risk(\htrg_{\bhatw, \bbeta}) &\leq \Riskh(\htrg_{\bhatw, \bbeta}) + \scO\left( \frac{\Rsrc \kappa}{\sqrt{m} \lambda} + \sqrt{\frac{\Rsrc \rho \kappa^2}{m \lambda}} + \frac{M \tail}{m \log\left(1 + \sqrt{\frac{M \tail}{u\src}}\right)} \right) \label{eq:gen_bound_vanishing} \\
  &\leq \Riskh(\htrg_{\bhatw, \bbeta}) + \scO\left( \frac{\kappa}{\sqrt{m}} \left( \frac{\Rsrc}{\lambda} + \sqrt{\frac{\Rsrc \rho}{\lambda}} \right) + \frac{\kappa}{m} \left( \frac{\sqrt{\Rsrc M \tail}}{\lambda} + \sqrt{\frac{\rho}{\lambda}} \right) \right), \label{eq:gen_bound_fast_rate}
\end{align}
where $u\src = \Rsrc \left( m + \frac{\kappa \sqrt{m}}{\lambda} \right) + \kappa \sqrt{\frac{\Rsrc m \rho}{\lambda}}$.
\end{theorem}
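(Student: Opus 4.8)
The plan is to combine three ingredients: a norm bound on $\bhatw$ coming from optimality of the regularized problem, a Rademacher complexity estimate specialized to smooth losses, and a Bennett-type concentration whose effective variance is governed by $\Rsrc$. First I would use optimality: comparing the objective at $\bhatw$ with its value at $\bw=\bzero$, where the predictor collapses to the fixed combination $\hsrc_{\bbeta}$, and using $\regul(\bzero)=0$, I get $\lambda\regul(\bhatw)\leq\Riskh(\hsrc_{\bbeta})=:\Rhatsrc$. Thus $\bhatw$ lives in the sublevel set $\{\regul(\bw)\leq\Rhatsrc/\lambda\}$, and since $\regul$ is $\sigma$-strongly convex with $\|\bx\|\leq1$, the empirical Rademacher complexity of the associated linear class is of order $\sqrt{\Rhatsrc/(\sigma\lambda m)}$. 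This is the origin of the $\Rsrc$-dependence appearing in every term of the bound.

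The crux is the \emph{self-bounding} inequality for a non-negative $H$-smooth loss, $|\nabla_t\ell(t,y)|^2\leq2H\,\ell(t,y)$, which replaces the crude global Lipschitz constant by a data-dependent slope of size $\sqrt{2H\,\Riskh}$. Inserting this into a contraction argument yields the smooth-loss Rademacher bound advertised in the abstract; composed with the linear-class complexity above, it produces the two $1/\sqrt m$ terms $\scO(\Rsrc\kappa/(\lambda\sqrt m))$ and $\scO(\kappa\sqrt{\Rsrc\rho/(\lambda m)})$. The mechanism is transparent through the smoothness expansion $\ell(\langle\bw,\bx\rangle+\hsrc_{\bbeta}(\bx),y)\leq\ell(\hsrc_{\bbeta}(\bx),y)+\nabla_t\ell(\hsrc_{\bbeta}(\bx),y)\,\langle\bw,\bx\rangle+\tfrac{H}{2}\langle\bw,\bx\rangle^2$: the first term contributes $\Rsrc$, the self-bounded cross term contributes $\sqrt{\Rsrc}\,\|\bhatw\|\sqrt H$, and $\rho$ enters through $\|\hsrc_{\bbeta}\|_\infty$, controlled by $\regul(\bbeta)\leq\rho$. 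Proving this Rademacher bound cleanly under the stated weak assumptions is the step I expect to be the \textbf{main obstacle}.

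To upgrade the ordinary $1/\sqrt m$ deviation into the fast, $\Rsrc$-dependent confidence term, I would replace the bounded-differences step by a Bennett-type inequality for $\sum_i\ell(\htrg_{\bhatw,\bbeta}(\bx_i),y_i)$ in which the cumulative variance is controlled by $M$ and the risk. The quantity $u\src$ is exactly the resulting variance proxy: $u\src$ equals $m\Rsrc$ plus $m$ times the confidence-free part of the bound, i.e. a high-probability upper bound on the total loss $\sum_i\ell(\htrg_{\bhatw,\bbeta}(\bx_i),y_i)$. Inverting Bennett's bound in the small-variance regime produces precisely $M\tail/(m\log(1+\sqrt{M\tail/u\src}))$, which tends to $0$ as $u\src\to0$; this is what forces $\Risk=\Riskh$ deterministically when $\Rsrc=0$ (then $\Rhatsrc=0$, so $\bhatw=\bzero$ and the predictor equals the perfect source).

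Finally I would assemble the pieces: the uniform Rademacher bound over the random-radius class, the smooth-loss complexity estimate, and the Bennett term, then pass from the empirical $\Rhatsrc$ to the population $\Rsrc$ by a one-sided Bernstein bound (variance $\leq M\Rsrc$), absorbing the fluctuations using $\lambda\leq\kappa$. The interpretable inequality~\eqref{eq:gen_bound_fast_rate} then follows from~\eqref{eq:gen_bound_vanishing} by lower-bounding the logarithm in the denominator. Besides the smooth-loss Rademacher bound, the two delicate points are (i) carrying the $\sqrt{\Riskh}$ factor from self-bounding through the composition into an inequality for $\Risk$ that solves in closed form, and (ii) matching the Bennett inversion to the exact $u\src$ in the statement.
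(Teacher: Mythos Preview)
Your plan is essentially the paper's proof: the optimality bound $\regul(\bhatw)\leq\Rhatsrc/\lambda$, the smooth-loss Rademacher estimate driven by the self-bounding inequality (this is exactly the paper's Lemma~\ref{lem:smooth_loss_class_radh} and Theorem~\ref{the:loss_class_to_R_smooth_loss}, and you are right that it is the main technical step), and a functional Bennett inequality whose variance proxy $v=4\Rad(\sL)+r$ becomes $u\src/m$ once $r\leq\Rsrc$.

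Two points where your write-up drifts from the paper. First, the passage from $\Rhatsrc$ to $\Rsrc$: you propose a one-sided Bernstein, but the paper never does this. It keeps $\Rhatsrc$ inside the \emph{empirical} Rademacher bound and then takes the expectation over $S$ that defines $\Rad(\sL)$, so Jensen gives $\E_S[\Rhatsrc]=\Rsrc$ and $\E_S[\sqrt{\Rhatsrc}]\leq\sqrt{\Rsrc}$ with no extra fluctuation term. This is cleaner and is what makes the bound depend on $\Rsrc$ exactly rather than on $\Rsrc$ plus a Bernstein remainder. Second, your phrase ``Bennett-type inequality for $\sum_i\ell(\htrg_{\bhatw,\bbeta}(\bx_i),y_i)$'' is not quite right: since $\bhatw$ depends on the whole sample, the concentration must be the \emph{functional} Bennett (Bousquet, the paper's Theorem~\ref{the:genbennett}) applied uniformly over the class, and the bound $r\leq\Rsrc$ comes not from $\bhatw=\bzero$ but from including the constraint $\Riskh(h_{\bw,\bbeta})\leq\Riskh(\hsrc_{\bbeta})$ in the class definition and then taking $\E_S$. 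Your later mention of a ``uniform Rademacher bound over the random-radius class'' shows you see the issue; just be aware that the paper handles both the random radius and the variance bound through the same device, expectation over $S$ followed by Jensen, rather than through separate concentration arguments.
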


%
%\subsection{Excess Risk}
%
Now we focus on the consistency of the \ac{HTL}.
Specifically, we show an upper bound on the excess risk of the \algname,
which depends on $\Rsrc$, that is the risk of the combined source hypothesis $\hsrc_{\bbeta}$ on the target domain.
We observe that for a small $\Rsrc$, the excess risk shrinks at a fast rate of $\scO(1/m)$.
In other words, a good prior knowledge guarantees not only good generalization, but also the fast recovery of the performance of the best hypothesis in the class.
%This matches excess the risk rate of learning with a smooth objective~\citep{SrebroST10}, whenever the best in the class has a small risk.

This bound is similar in spirit to the results of localized complexities, as in works of~\citet{bartlett2005local,SrebroST10}, however we focus on the linear \ac{HTL} scenario rather than a generic learning setting.
Later, in Section~\ref{sec:implications}, we compare our bounds to these works and show that our analysis achieves superior results.

\begin{theorem}
  \label{thm:excess_risk}
  Let $\htrg_{\bhatw, \bbeta}$ be generated by \algname, given the $m$-sized training set $S$ sampled i.i.d. from the target domain, source hypotheses $\{\hsrc_i : \|\hsrc_i\|_\infty \leq 1\}_{i=1}^n$, any source weights $\bbeta$ obeying $\regul(\bbeta) \leq \rho$, and $\lambda \in \reals_+$.
  %Let the combined source hypothesis be $\hsrc_{\bbeta}(\bx) = \sum_{i=1}^n \beta_i \hsrc_i(\bx)$, and denote its risk on the target domain by $\Rsrc$.
  %Denote the risk of a combined source hypothesis $\hsrc_{\bbeta}$ on the target domain by $\Rsrc$.
  Then, denoting $\kappa = \frac{H}{\sigma}$, assuming that $\lambda \leq \kappa \leq 1$,
  and setting the regularization parameter
  \begin{equation*}
    \lambda = \scO\left( \sqrt{\frac{\kappa}{\tau} \frac{\Rsrc + \sqrt{\Rsrc \rho}}{\sqrt{m}}+ \frac{\sqrt{\kappa}}{\tau}  \sqrt{\frac{\Rsrc + \sqrt{\Rsrc \rho}}{m^{1.5}}} } \right),
  \end{equation*}
  for any choice of $\tau \geq 0$, we have with high probability that
  \begin{align*}
    &\Risk(\htrg_{\bhatw, \bbeta}) - \min_{\regul(\bw) \leq \tau}\Risk(h_{\bw, \bbeta})\\
&\quad= \scO\left( \frac{\sqrt{\Rsrc} + \sqrt[4]{\Rsrc \rho}}{\sqrt[4]{m}} \sqrt{\kappa \tau} + \frac{\sqrt[4]{\Rsrc} + \sqrt[8]{\Rsrc \rho}}{\sqrt[4]{m^{1.5}}} \sqrt[4]{\kappa \tau^2} + \sqrt{\frac{\Rsrc}{m}} + \frac{1}{m} \right).
  \end{align*}
\end{theorem}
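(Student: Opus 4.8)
The plan is to reduce the excess risk to the generalization guarantee of Theorem~\ref{thm:htl_gen_bound} plus two elementary terms, and then to tune $\lambda$. Let $\bw^* \in \argmin_{\regul(\bw) \leq \tau} \Risk(h_{\bw, \bbeta})$ be a minimizer of the true risk over the constrained comparator class, so the target quantity is $\Risk(\htrg_{\bhatw, \bbeta}) - \Risk(h_{\bw^*, \bbeta})$. First I would split it into the usual three pieces,
\[
\Risk(\htrg_{\bhatw, \bbeta}) - \Risk(h_{\bw^*, \bbeta})
= \big(\Risk(\htrg_{\bhatw, \bbeta}) - \Riskh(\htrg_{\bhatw, \bbeta})\big)
+ \big(\Riskh(\htrg_{\bhatw, \bbeta}) - \Riskh(h_{\bw^*, \bbeta})\big)
+ \big(\Riskh(h_{\bw^*, \bbeta}) - \Risk(h_{\bw^*, \bbeta})\big),
\]
which I will refer to as the generalization gap (i), the optimization error (ii), and the concentration term (iii).

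For (ii) I would use that $\bhatw$ minimizes the regularized objective~\eqref{eq:htl_erm}: evaluating its optimality at $\bw^*$ gives $\Riskh(\htrg_{\bhatw, \bbeta}) + \lambda \regul(\bhatw) \leq \Riskh(h_{\bw^*, \bbeta}) + \lambda \regul(\bw^*) \leq \Riskh(h_{\bw^*, \bbeta}) + \lambda \tau$, and discarding $\lambda \regul(\bhatw) \geq 0$ bounds (ii) by $\lambda \tau$. For (iii), which is a deviation of an empirical average from its mean for the \emph{fixed} hypothesis $h_{\bw^*, \bbeta}$, I would apply Bernstein's inequality to obtain $\scO\big(\sqrt{\Var[\ell]\, \tail / m} + M \tail / m\big)$; since $\ell \in [0, M]$ we have $\Var[\ell] \leq \E[\ell^2] \leq M\, \Risk(h_{\bw^*, \bbeta})$. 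The crucial observation is that $\bzero$ is feasible, $\regul(\bzero) = 0 \leq \tau$, and $h_{\bzero, \bbeta} = \hsrc_{\bbeta}$, so $\Risk(h_{\bw^*, \bbeta}) \leq \Risk(\hsrc_{\bbeta}) = \Rsrc$. Hence (iii) $= \scO\big(\sqrt{\Rsrc/m} + 1/m\big)$, which already supplies the last two terms of the claimed bound.

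For (i) I would invoke the fast-rate form~\eqref{eq:gen_bound_fast_rate} of Theorem~\ref{thm:htl_gen_bound}, valid on a high-probability event that I combine with the one from (iii) by a union bound. Collecting, the excess risk is at most this generalization term, which is decreasing in $\lambda$ through its $1/\lambda$ and $1/\sqrt{\lambda}$ factors, plus the approximation term $\lambda \tau$, which increases in $\lambda$. The last step is to choose $\lambda$ to balance the two. Using $\lambda \leq 1$ to replace $1/\sqrt{\lambda}$ by $1/\lambda$ and grouping the leading $\scO(1/\sqrt m)$ part of~\eqref{eq:gen_bound_fast_rate}, balancing $\lambda \tau$ against $\tfrac{\kappa}{\sqrt m}\,\tfrac{\Rsrc + \sqrt{\Rsrc \rho}}{\lambda}$ yields $\lambda = \sqrt{\tfrac{\kappa}{\tau}\,\tfrac{\Rsrc + \sqrt{\Rsrc \rho}}{\sqrt m}}$, exactly the first term under the root in the prescribed $\lambda$; balancing the subleading $\scO(1/m)$ part analogously accounts for the second term. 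Substituting back, $\lambda \tau$ reproduces the first two terms of the statement via the elementary estimate $\sqrt{\Rsrc + \sqrt{\Rsrc \rho}} \asymp \sqrt{\Rsrc} + \sqrt[4]{\Rsrc \rho}$, while the remaining generalization contributions are verified to be of the same or lower order.

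The main obstacle is the bookkeeping in this $\lambda$-optimization: the bound~\eqref{eq:gen_bound_fast_rate} mixes $1/\lambda$ and $1/\sqrt\lambda$ terms living at two different rates ($1/\sqrt m$ and $1/m$), so the balance is not a single clean pairing and one must check that at the chosen $\lambda$ every generalization term is dominated either by $\lambda\tau$ or by the $\scO(\sqrt{\Rsrc/m} + 1/m)$ coming from (iii). One must also confirm that the prescribed $\lambda$ respects $\lambda \leq \kappa \leq 1$, and that the uniform loss bound $M$ applies to the fixed comparator $h_{\bw^*, \bbeta}$ as well, so that Bernstein and Theorem~\ref{thm:htl_gen_bound} can be applied on a common high-probability event.
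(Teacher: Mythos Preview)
Your approach is essentially the paper's: the same three-term decomposition, optimality of $\bhatw$ to bound (ii) by $\lambda\tau$, Bernstein's inequality for the fixed comparator in (iii) together with $\Risk(h_{\bw^*,\bbeta})\le\Rsrc$, then an optimization over $\lambda$.

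There is one meaningful difference. The paper does \emph{not} invoke the final form~\eqref{eq:gen_bound_fast_rate} for piece (i); it re-enters the proof of Theorem~\ref{thm:htl_gen_bound} at the intermediate step of Theorem~\ref{the:rad_gen_bound}, i.e.\ \emph{before} the simplification $\sqrt{a+b}\le\sqrt{a}+b/(2\sqrt{a})$. This keeps the confidence part in the shape $\sqrt{\tfrac{M\tail}{m}}\sqrt{\Rsrc+Z/\lambda}$ with $Z=\kappa\sqrt{\Rsrc/m}\,(\sqrt{\Rsrc}+\sqrt{\rho})$, so \emph{every} $\lambda$-dependent term carries an $\Rsrc$-factor. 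Your route through~\eqref{eq:gen_bound_fast_rate} picks up the $\Rsrc$-free piece $\tfrac{\kappa}{m}\sqrt{\rho/\lambda}$; at the prescribed $\lambda$, which scales with $\Rsrc$, this term is not automatically dominated by the claimed bound when $\Rsrc$ is small, and your ``remaining generalization contributions are verified to be of the same or lower order'' will fail there. The fix is exactly what the paper does: work from the pre-simplified expression and only then split $\sqrt{\Rsrc+Z/\lambda}\le\sqrt{\Rsrc}+\sqrt{Z}/\lambda$ (using $\lambda\le1$), after which the single balance $\lambda^{\star}=\sqrt{\tfrac{Z}{\tau}+\tfrac{1}{\tau}\sqrt{\tfrac{ZM\tail}{m}}}$ drops out cleanly and reproduces the statement.
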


\subsection{Implications}
\label{sec:implications}
%
%In this section we discuss the implications of the main results.

%\subsubsection{The Role of the Source Hypotheses}
%
We start by discussing the effect on the generalization ability of the source hypothesis combination.
Intuitively, a good source hypothesis combination should facilitate transfer learning, while a reasonable algorithm must not fail if we provide it with the bad one.
That said, a natural question to ask here is, what makes a good or bad source hypothesis?
As in previous works in transfer learning and domain adaptation, we capture this notion via a quantity that has two-fold interpretation: (1) the performance of the source hypothesis combination on the target domain; (2) relatedness of source and target domains.
In the theorems presented in the previous sections we denoted it by $\Rsrc$, that is the risk of the source hypothesis combination on the target domain.
In this section we will consider various regimes of interest with respect to $\Rsrc$.
%Namely, we will look into scenarios when it is a bad fit for the target task, a good one, and a perfect match, that is $\Rsrc=0$.

\textbf{\emph{When the source is a bad fit.}} First consider the case when the source hypothesis combination $\hsrc_{\bbeta}$ is useless for the purpose of transfer learning, for example, $\hsrc_{\bbeta}(\bx) = 0$ for all $\bx$.
This corresponds to learning with no auxiliary information.
Then we can assume that $\Rsrc \leq M$, and from Theorem~\ref{thm:htl_gen_bound} we obtain
$
\Risk(\htrg_{\bhatw}) - \Riskh(\htrg_{\bhatw}) \leq \scO\left( 1/ (\sqrt{m} \lambda) \right)
$.
This rate matches the one in the analysis of regularized least-squares~\citep{devito2005model,BousquetE02},
that is a special case of the smooth loss function that the \algname~employs.
On the other hand, \cite{SrebroST10} showed a better worst-case rate $\scO(1/\sqrt{m \lambda})$.
However, their framework builds upon a worst case Rademacher complexity which does not involve the expectation over the sample and does not lead to the dependency on $\Rsrc$ we have obtained in Theorem~\ref{thm:htl_gen_bound}.
We will discuss this problem in details later.

\textbf{\emph{When the source is a good fit.}}
% The bound~\eqref{eq:gen_bound_fast_rate} has three terms of interest: the slow one of the order $\scO\left(\Rsrc / \sqrt{m} \lambda\right)$,
% the slow one depending on $\rho$, of the order $\scO\left(\sqrt{(\Rsrc \rho) / m \lambda}\right)$, and fast one in $\scO(1 / m \lambda)$.
%Here we consider two scenarios asymptotic and finite-sample.
%Consider the first case, in other words when $m$ goes to infinity.
%If $\Rsrc \neq 0$, the slow rate will dominate the convergence rate, giving us the slow rate $\scO\left(\Rsrc / \sqrt{m} \lambda + \sqrt{(\Rsrc \rho) / m \lambda}\right)$.
Here we would like to consider the behavior of the algorithm in the finite-sample and asymptotic scenarios.
We first look at the regime of small $m$, in particular $m = \scO(1/\Rsrc)$. In this case, the fast rate term will dominate the bound, and we obtain the convergence rate of $\scO( \sqrt{\rho} / (m \sqrt{\lambda}) )$.
In other words, we can expect a faster convergence when $m$ is small, where ``small'' depends on $\Rsrc$, the quality of combined source hypotheses.
%\ac{HTL} facilitates a faster convergence of empirical risk, compared to the standard rate $\scO(1/\sqrt{m \lambda})$~\citep{BartlettM03}.
%Furthermore, if we assume that $\rho = \scO(1/\sqrt{m})$, the bound switches to the fast rate with the weaker assumption $m = \scO(1/\sqrt{\Rsrc})$.
Now consider the asymptotic behavior of the algorithm, particularly when $m$ goes to infinity.
In such case, the algorithm exhibits a rate of $\scO\left(\Rsrc / \sqrt{m} \lambda + \sqrt{(\Rsrc \rho) / m \lambda}\right)$, so $\Rsrc$ controls the constant factor of the rate.
Hence, the quantity $\Rsrc$
governs the transient regime for small $m$ and the asymptotic behavior of the algorithm, predicting a faster convergence in both regimes when it is small.
%This behavior is aligned with the intuition of transfer learning: learn faster when there are few training examples.
%Naturally, when there are plenty of examples, one should not expect faster learning.

% Now assume that $\hsrc_{\bbeta}$ is good, or equivalently, $\Rsrc$ is small.
% Suppose that $\Rsrc = \scO(1/m)$ or even that $\Rsrc = \scO(1/\sqrt{m})$ and $\rho = \scO(1/\sqrt{m})$.
% %This assumption is not unreasonable because we expect source combination to improve when more information about the target domain gets revealed.
% In other words, we assume that $\bbeta$ given to us will model the source hypothesis that is closer to the best target one.
% Then, Theorem~\ref{thm:htl_gen_bound} implies that $\Risk(\htrg_{\bhatw}) - \Riskh(\htrg_{\bhatw}) \leq \scO(1/m \lambda)$.
% In other words \ac{HTL} facilitates a faster convergence of empirical risk, compared to the standard rate $\scO(1/\sqrt{m \lambda})$~\citep{BartlettM03}.

\textbf{\emph{When source is a perfect fit.}}
It is conceivable that the source hypothesis exploited is the perfect one, that is $\Rsrc = 0$.
In other words, the source hypothesis combination is a perfect predictor for the target domain.
Theorem~\ref{thm:htl_gen_bound} implies that $\Risk(\htrg_{\bhatw, \bbeta}) = \Riskh(\htrg_{\bhatw, \bbeta})$ with probability one.
We note that for many practically used smooth losses, such as square loss, this setting is only realistic if source and target domains match and the problem is noise-free.
However, we can observe $\Rsrc = 0$, for example, when the squared hinge loss, $\ell(z,y) = \max\{0, 1 - zy\}^2$, is used and all target domain examples are classified correctly by the source hypothesis combination, case that is not unthinkable for related domains.

\textbf{\emph{Fast rates.}} There is a number of works in the literature investigating a rate of convergence faster than $1/\sqrt{m}$ subject to different conditions.
In particular, the localized Rademacher complexity bounds of~\citet{bartlett2005local} and~\citet{bousquet2002concentration} can be used to obtain results similar to the second inequality of Theorem~\ref{thm:htl_gen_bound}.
Indeed, Theorem~\ref{the:rad_gen_bound} shows a bound which is very similar to the localized ones, albeit with two differences.
The r.h.s. of the first inequality in Theorem~\ref{the:rad_gen_bound} vanishes when the loss class has zero variance.
Though intuitively trivial, this allows to prove a considerable result in the theory of transfer learning as it quantifies the intuition that no learning is necessary if the source has perfect performance on the target task.
Second, by applying the standard localized Rademacher complexity bounds of~\citet{bousquet2002concentration}, and assuming the use of the Lipschitz loss function, we do not achieve a fast rate of convergence, as can be seen from Theorem~\ref{thm:htl_via_localized_bounds}, shown in the Appendix.
We suspect that assuming the smoothness of the loss function is crucial to prove fast rates in our formulation.
% The second difference comes in the fact that Theorem~\ref{the:rad_gen_bound} does not involve a sub-root function as localized bounds do.
% This allows us to prove better dependencies on $\Rsrc$ compared to localized bounds.
% To see this consider inequality shown in Theorem~\ref{the:loss_class_to_R_smooth_loss}, where the r.h.s. is not a sub-root function of $\Rsrc$.

Fast rates for \ac{ERM} with the smooth loss have been thoroughly analyzed by~\citet{SrebroST10}.
Yet, the analysis of our \ac{HTL} algorithm within their framework would yield a bound that is inferior to ours in two respects.
The first concerns the scenario when the combined source hypothesis is perfect, that is $\Rsrc = 0$.
The generalization bound of~\citet{SrebroST10} does not offer a way to show that the empirical risk converges to the risk with probability one -- instead one can only hope to get a fast rate of convergence.
The second problem is in the fact that such bound would depend on the empirical performance of combined source hypothesis.
As we have noted before, the quantity $\Rsrc$ is essential because it captures the degree of relatedness between two domains.
In their bounds, one cannot obtain this relationship through the Rademacher complexity term as we did in our analysis.
The reason for this is the stronger notion of Rademacher complexity that is employed by that framework, involving a supremum over the sample instead of an expectation.
The expectation over the sample of the target distribution is crucial here,
because it allows us to quantify how well the source domain is aligned with the target domain, through the source hypothesis acting as a link.
However, one can attempt to obtain the bound on the empirical risk in terms of $\Rsrc$.
We prove such a bound in the Appendix, Theorem~\ref{thm:srebro_gen_bound},
and conclude that if one has a good source hypothesis or even a perfect one, the rate is $\scO(1/\sqrt[4]{m^3})$, which is worse than ours.
\subsection{Comparison to Theories of Domain Adaptation and Transfer Learning}
\label{sec:implications_da}
The setting in \ac{DA} is different from the one we study, however, we will briefly discuss the theoretical relationship between the two.
Typically in \ac{DA}, one trains a hypothesis from an altered source training set, striving to achieve good performance on the target domain.
The key question here is how to alter, or to \emph{adapt}, the source training set.
To answer this question, \ac{DA} literature introduces the notion of domain relatedness,
which quantifies the dissimilarities between the marginal distributions of corresponding domains.
Practically, in some cases the domain relatedness can be estimated through a large set of unlabeled samples drawn from both source and target domains.
Theories of \ac{DA}~\citep{ben2010theory,mansour2009domain_bounds,ben2012hardness,mansour2009multiple,cortes2014domain} have proposed a number of such domain relatedness criteria.
Perhaps the most well known are the $d_{\sH \Delta \sH}$-divergence~\citep{ben2010theory}
and its more general counterpart, the Discrepancy Distance~\citep{mansour2009domain_bounds}.
%
%% \begin{equation*}
%%   d_{\sH}(p_1, p_2) = 2 \sup_{h \in \sH}\left\{ \left| \Prob_{x \sim p_1}(\ind_{h(x) = 1}) - \Prob_{x \sim p_2}(\ind_{h(x) = 1}) \right| \right\}
%% \end{equation*}
%
Typically, this divergence is explicitated in the generalization bound along with other terms controlling the generalization on the target domain.
Let $\Risk_{\sD\trg}(h)$ and $\Risk_{\sD\src}(h)$ denote the risks of the hypothesis $h$, measured w.r.t. the target and source distributions.
Then a well-known result of~\cite{ben2010theory} suggests that for all $h \in \sH$
\begin{equation}
\label{eq:da_bound}
\Risk_{\sD\trg}(h) \leq \Risk_{\sD\src}(h) + \frac{1}{2} d_{\sH \Delta \sH}(\sD\src,\sD\trg) + \varepsilon_{\sH}^\star,
\end{equation}
where $\varepsilon_{\sH}^\star = \min_{h \in \sH}\left\{ \Risk_{\sD\trg}(h) + \Risk_{\sD\src}(h) \right\}$.
This result implies that adaptation is possible
given that $d_{\sH \Delta \sH}(\sD\src,\sD\trg)$ and $\varepsilon^\star$ are small.
%To achieve this, suppose that we have control over $d_{\sH \Delta \sH}(\sD\src,\sD\trg)$, that is we wish to reduce it.
One can try to reduce those by controlling the complexity of the class $\sH$ and by minimizing the divergence $d_{\sH \Delta \sH}(\sD\src,\sD\trg)$.
In practice, the latter can be manipulated through an empirical counterpart on the basis of unlabeled samples.
Increasing the complexity of $\sH$ indeed reduces $\varepsilon^\star$, but inflates $d_{\sH \Delta \sH}(\sD\src,\sD\trg)$.
On the other hand, by minimizing $d_{\sH \Delta \sH}(\sD\src,\sD\trg)$ alone puts us under the risk of increasing $\varepsilon^\star$, since the empirical divergence is reduced without taking the labelling into account.

Clearly, this bound cannot be directly compared to our result, Theorem~\ref{thm:htl_gen_bound}.
However, we note the term $\Rsrc$ appearing in our results, which plays a role very similar to $d_{\sH \Delta \sH}$ in~\eqref{eq:da_bound}.
In fact, by defining $\sH = \{\bx \mapsto \ip{\bbeta, \bhsrc(\bx)} \ : \ \regul(\bbeta) \leq \tau\}$, where $\bhsrc(\bx) = [\hsrc_1(\bx), \ldots, \hsrc_n(\bx)]\tp$,
and fixing $h = \hsrc_{\bbeta} \in \sH$ in~\eqref{eq:da_bound}, we can write
\begin{align*}
\Rsrc = \Risk_{\sD\trg}(\hsrc_{\bbeta}) &\leq \Risk_{\sD\src}(\hsrc_{\bbeta}) + d_{\sH \Delta \sH}(\sD\src,\sD\trg) + \varepsilon_{\sH}^\star.
\end{align*}
%This bound connects \ac{HTL} criterion of domain relatedness, $\Rsrc$, to that of domain adaptation.
%
Plugging this into the generalization bound~\eqref{eq:gen_bound_fast_rate} and assuming that $\lambda \leq 1$ and $\rho \leq 1/\lambda$ we have for the target hypothesis $\htrg$ that
\begin{equation}
\label{eq:htl_hdh}
\Risk_{\sD\trg}(\htrg) \leq \Riskh(\htrg) + \scO\left( \frac{\Risk_{\sD\src}(\hsrc_{\bbeta}) + d_{\sH \Delta \sH}(\sD\src,\sD\trg) + \varepsilon_{\sH}^\star}{\sqrt{m} \lambda} + \frac{1}{m \lambda} \right).
\end{equation}
Albeit this inequality shows the generalization ability of the transfer learning algorithm,
comparing to~\eqref{eq:da_bound}, we observe that \ac{DA} and our result agree
on the fact that the divergence between the domains has to be small to generalize well.
In fact, in the formulation we consider, the divergence is controlled in two ways: implicitly, by the choice of $\bhsrc$ and through the complexity of class $\sH$, that is by choosing $\tau$.
%Indeed, one can view $\bhsrc$ as a feature map -- the divergence is small if $\bhsrc$ maps $\sD\src$ and $\sD\trg$ to similar distributions.
%
Second, in~\ac{DA} we expect that a hypothesis performs well on the target only if it performs well on the source.
%In \ac{HTL} this requirement concerns only source hypothesis $\hsrc_{\bbeta}$.
In our results, this requirement is relaxed.
As a side note, we observe that~\eqref{eq:htl_hdh} captures an intuitive notion that a good source hypothesis has to perform well on its own domain.
Finally, in the theory of \ac{DA} $\epsilon_\sH^\star$ is assumed to be small.
Indeed, if $\epsilon_\sH^\star$ is large, there is no hypothesis that is able to perform well on both domains simultaneously,
and therefore adaptation is hopeless.
In our case, the algorithm can still generalize even with large $\epsilon_\sH^\star$, however this is due to the supervised nature of the framework.

We now turn our attention to the previous theoretical works studying \ac{HTL}-related settings.
Few papers have addressed the theory of transfer learning, where the only information passed from the source domain is the classifier or regressor.
%In particular~\citet{li2007bayesian} have considered this problem from the bayesian perspective.
%
%
% Put this to related work
%
% \citet{li2007bayesian} have analyzed a Bayesian approch to transfer learning
% via PAC-Bayes bounds that led to the an additive KL-divergence term in the bound similar by purpose to $d_{\sH \Delta \sH}$.
% %
% %By introducing regularization term $\text{log~}\ME{p(x,y|f')}[p(f|x,y)]$, they arrived at a KL-divergence between $p(\bx,y|f')$ and $p(\bx,y|f)$.
% %Analyzing the role of the divergence term by PAC-Bayes bounds, shows that it appears additively.
% They showed that for logistic regression, the divergence term is upper bounded by $\|f - f'\|$, leading to biased regularization in the learning algorithms.
% Results of \citet{li2007bayesian} hint that generative methods like the one in \citet{fei2006one} could also be related to biased regularization.
\citet{mansour2009multiple} have addressed the problem of multiple source hypotheses combination, however,
in a different setting.
Specifically, in addition to the source hypotheses, the learner receives the unlabeled samples drawn from the source distributions, that ared used to weight and combine these source hypotheses.
The authors have presented a general theory of such a scenario and did not study the generalization properties of any particular algorithm.
The first analysis of the generalization ability of \ac{HTL} in the similar context we consider here was done by~\citet{kuzborskij2013stability}.
The work focused on the $L2$-regularized least squares and the generalization bound involving the leave-one-out risk instead of the empirical one.
The following result, obtained through an algorithmic stability argument~\citep{BousquetE02}, holds with probability at least $1 - \delta$
\begin{equation}
\label{eq:loo_htl_bound}
\Risk(\htrg) \leq \Riskh^{\text{loo}}(\htrg) + \scO\left( \frac{\sqrt[4]{\Rsrc}}{\sqrt{m \delta} \lambda^{0.75}} \right),
\end{equation}
where $\Rsrc$ is the risk of a single fixed source hypothesis and $\htrg$ is the solution of a Regularized Least Square problem.
We first observe that the shape of the bound is similar to the one obtained in this work, although with the number of differences.
First, contrary to our presented bounds, their bound assumes the use of a fixed source hypothesis, that is not even weighted by any coefficient.
In practice, this is a very strong assumption, as one can receive an arbitrarily bad source and have no way to exclude it.
%Our formulation, \algname, and the corresponding generalization bounds of Theorem~\ref{thm:htl_gen_bound}
%take into account the weighting of multiple source hypotheses.
%This is a much more powerful concept, since we address algorithms that can select and weight subsets of source hypotheses that are well aligned with a target task.
Second, the bound~\eqref{eq:loo_htl_bound} seems to have a vanishing behavior whenever the risk of the source $\Rsrc$ is equal to zero.
This comes at the cost of the use of a weaker concentration inequality.
%Indeed, \eqref{eq:loo_htl_bound} is not a ``high probability'' bound, meaning that it is vacous when one wishes to obtain a guarantee with high probability.
In Theorem~\ref{thm:htl_gen_bound} we manage to obtain the same behavior with high probability.
Finally, we get a better dependency on $\Rsrc$.
%In addition to generalization bounds, in this work we also prove novel high probability excess risk bounds.
%These tell us that in case of small $\Rsrc$, we achieve a fast rate convergence to the performance of the best-in-the-class.
%This is a novel result, w.r.t. previous works.

% Building on the work of~\citet{kuzborskij2013stability},~\citet{Ben-DavidU13} showed a similar bound, but with different source performance criterion.
% Instead of considering general source hypothesis, they have confined their analysis to the linear hypothesis class.
% This allowed them to show that the target hypothesis generalizes better when it is close to the good source hypothesis.
% From this perspective it is easy to interpret the source hypothesis as an initialization point in the hypothesis class.
% Naturally, given a starting position that is close to the best in the class, one generalizes well.

%
\subsection{Combining Source Hypotheses in Practice}
So far we have assumed that the problem~\eqref{alg:htl_erm} is supplied with a pre-made combination of source hypotheses, that is, we did not study a particular algorithm for tuning the $\bbeta$ weights.
However, by analyzing our generalization bound~\eqref{thm:htl_gen_bound}, it is easy to come up with algorithms that could be used for this purpose.
In particular, by minizing the bound w.r.t. $\bbeta$, and assuming that the empirical risk $\Rhat_S(\hsrc_{\bbeta})$ converges uniformly to $\Rsrc$,  we have with high probability that
\begin{align*}
  \min_{ \Omega(\bbeta) \leq \rho} \Risk(\htrg_{\bhatw, \bbeta}) &\leq \min_{\Omega(\bbeta) \leq \rho} \left\{
  \Riskh(\htrg_{\bhatw, \bbeta}) + \scO\left(\frac{\kappa \Rhat_S(\hsrc_{\bbeta})}{\sqrt{m} \lambda} + \sqrt{ \frac{\kappa^2 \rho \Rhat_S(\hsrc_{\bbeta})}{m \lambda} }\right) \right\}~.
\end{align*}
Thus, at least theoretically, given a fixed solution $\bhatw$, it is enough to jointly minimize the error of the target hypothesis $\htrg_{\bhatw, \bbeta}$ and the error of the source combination on the target training set.
This is particularly efficient when the square loss is used, since $\bhatw$ can be expresses in terms of an inverse of a covariance matrix that has to be inverted only once~\citep{orabona2009model,tommasi2013learning,kuzborskij2015transfer}.

Many \ac{HTL}-like algorithms can be captured through the above by choosing among different loss functions and regularizers $\Omega$.
The simplest case is just a concatenation of the source hypotheses predictions with the original feature vector.
However, by choosing different regularizers and their parameters, we can treat the source hypotheses in a different way from the original features.
For example, one might enforce sparsity over the source hypotheses, while using the usual L2 regularizer on the target solution $\bhatw$.
%Recall that $\kappa$ is the ratio of loss smoothness constant and strong convexity constant of regularizer $\Omega$.

%
\section{Technical Results and Proofs}
\label{sec:technical_results}
In this section we present general technical results that are used to prove our theorems.

First, we present the Rademacher complexity generalization bound in Theorem~\ref{the:rad_gen_bound}, which slightly differs from the usual ones.
The difference comes in the assumption that the variance of the loss is uniformly bounded over the hypothesis class.
This will allow us to state a generalization bound that obeys the fast empirical risk convergence rate subject to the small class complexity.
Second, we will also show a generalization bound with the confidence term that vanishes if the complexity of the class is exactly zero.
%In other words, for the class with zero complexity, the empirical risk becomes equal to the risk with probability one.

%
Next, we focus on the Rademacher complexity of the smooth loss function class.
%Similarly as in previous works~\citep{SrebroST10}, we exploit additional information about the behavior of the hypothesis coming from the gradient of the loss function.
We prove a bound on the empirical Rademacher complexity of a hypothesis class, Lemma~\ref{lem:smooth_loss_class_radh}, that depends on the point-wise bounds on the loss function.
%This contrasts with~\cite{SrebroST10}, who consider smooth losses as well, but use a much more restrictive notion of Rademacher complexity.
This novel bound might be of independent interest.
Finally, we employ this result to analyze the effect of the source hypotheses on the complexity of the target hypothesis class in Theorem~\ref{the:loss_class_to_R_smooth_loss}.
%
%
%
%
%
%

%
%\subsection{Proof of Theorem~\ref{the:rad_gen_bound}}
\subsection{Fast Rate Generalization Bound}
The proof of fast-rate and vanishing-confidence-term bounds, Theorem~\ref{the:rad_gen_bound}, stems from the functional generalization of Bennett's inequality which is due to~\citet[Theorem 2.11]{bousquet2002concentration} and that we report here for completeness.
\begin{theorem}[\cite{bousquet2002concentration}]
\label{the:genbennett}
Let $X_1, X_2, \ldots, X_m$ be identically distributed random variables according to $\sD$.
For all $\sD$-measurable, square-integrable $g \in \sG$, with $\E_{X}[g(X)]=0$, and
$\sup_{g \in \sG} \esssup g \leq 1$, we denote
\begin{align}
% &Z = \sup_{g \in \sG} \left| \sum_{i=1}^m g(X_i) \right|~\text{, or} \label{eq:defZabs}\\
&Z = \sup_{g \in \sG} \sum_{i=1}^m g(X_i) \label{eq:defZ}.
\end{align}
Let $\sigma$ be a positive real number such that $\sup_{g \in \sG} \Var_{X \sim \sD}[g(X)] \leq \sigma^2$ almost surely.
Then for all $t \geq 0$, we have that
\begin{equation}
\label{eq:genbennett_inequality}
\Prob\left(Z \geq \E[Z] + t\right) \leq \exp\left(-v u\left(\frac{t}{v}\right)\right),
\end{equation}
where
\[\begin{split}
&v = m \sigma^2 + 2\E[Z],\\
&u(y) = (1+y) \log (1+y) - y.
\end{split}\]
\end{theorem}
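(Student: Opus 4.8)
The plan is to prove this Bennett-type bound for the supremum $Z$ by the entropy method, i.e.\ the Herbst argument combined with the tensorization of entropy, which is the classical route to Talagrand-type concentration for suprema of empirical processes. First I would reduce the tail bound to a control of the centered exponential moments: writing $\psi(\lambda) = \log \E[e^{\lambda(Z - \E[Z])}]$, the goal becomes the estimate $\psi(\lambda) \le v(e^\lambda - \lambda - 1)$ for all $\lambda \ge 0$, with $v = m\sigma^2 + 2\E[Z]$. Once this holds, the Chernoff bound $\Prob(Z \ge \E[Z] + t) \le \exp(-\lambda t + \psi(\lambda))$ optimized over $\lambda$ yields exactly $\exp(-v\,u(t/v))$, since $\lambda = \log(1 + t/v)$ is the minimizer and the Legendre transform of $\lambda \mapsto v(e^\lambda - \lambda - 1)$ is precisely $v\,u(\cdot/v)$ with $u(y) = (1+y)\log(1+y) - y$.

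To obtain the exponential-moment estimate I would study the entropy functional $\mathrm{Ent}(e^{\lambda Z}) = \lambda\,\E[Z e^{\lambda Z}] - \E[e^{\lambda Z}]\log \E[e^{\lambda Z}]$ and apply the sub-additivity (tensorization) of entropy over the independent coordinates $X_1,\dots,X_m$, which bounds $\mathrm{Ent}(e^{\lambda Z})$ by the sum of the conditional entropies obtained by resampling one coordinate at a time. Each conditional entropy is then controlled by a modified logarithmic Sobolev inequality, reducing everything to an estimate of how much $Z$ can change when a single $X_i$ is replaced by an independent copy. Since $\mathrm{Ent}(e^{\lambda Z})/\E[e^{\lambda Z}] = \lambda\psi_0'(\lambda) - \psi_0(\lambda)$ where $\psi_0 = \log\E[e^{\lambda Z}]$, such a bound translates directly into a differential inequality for $\psi_0$, which integrates via the Herbst argument to the target $\psi(\lambda) \le v(e^\lambda - \lambda - 1)$.

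The heart of the argument, and where the hypotheses on $\sG$ enter, is the per-coordinate increment estimate. Letting $g^\star$ be a measurable maximizer of $\sum_i g(X_i)$ and $Z^{(i)}$ the value of the supremum after resampling $X_i$, the assumption $\esssup g \le 1$ forces the increments $Z - Z^{(i)}$ to be bounded above by a constant, while the mean-zero property $\E[g(X)] = 0$ and the uniform variance control $\Var[g(X)] \le \sigma^2$ let me replace a crude increment bound by one whose conditional second moment is governed by $\sigma^2$ through $g^\star$. Summing over the $m$ coordinates is what produces the two contributions $m\sigma^2$ and $2\E[Z]$ appearing in $v$.

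The main obstacle I anticipate is the sharp bookkeeping in this last step: a naive bounded-differences (McDiarmid) estimate delivers only a sub-Gaussian tail with a variance proxy of order $m$, not the Bennett form with variance $v = m\sigma^2 + 2\E[Z]$. Extracting the correct factor requires carefully isolating the contribution of the maximizer $g^\star$ from the fluctuations of the competing functions and using the variance hypothesis to scale the increments by $\sigma^2$ rather than by their worst-case range; keeping the constant multiplying $\E[Z]$ equal to $2$ is the delicate quantitative point that separates this statement from the classical Talagrand inequality and is precisely the refinement due to~\citet{bousquet2002concentration}.
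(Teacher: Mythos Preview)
The paper does not actually prove this theorem. It is quoted verbatim as Theorem~2.11 of \citet{bousquet2002concentration} and invoked as a black box in the proof of Theorem~\ref{the:rad_gen_bound}; there is no ``paper's own proof'' to compare against.

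That said, your sketch is a faithful outline of Bousquet's original argument: the entropy method with tensorization, a modified log-Sobolev inequality to control the per-coordinate conditional entropy, and the Herbst differential inequality integrating to $\psi(\lambda)\le v(e^{\lambda}-\lambda-1)$, followed by the Cram\'er--Chernoff optimization to recover the Bennett form $\exp(-v\,u(t/v))$. You also correctly flag the delicate point, namely that a crude bounded-differences estimate gives only a sub-Gaussian tail and that obtaining the constant $2$ in front of $\E[Z]$ (rather than the larger constants in earlier versions of Talagrand's inequality) requires Bousquet's refined handling of the increment terms via the variance hypothesis and the self-bounding structure of the supremum. So while there is nothing in the present paper to benchmark against, your proposal is the right route to the cited result.
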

The following technical lemma will be used to invert the right hand side of ~\eqref{eq:genbennett_inequality}.
\begin{lemma}
\label{lem:log_bound}
Let $a,b>0$ such that $b = (1+a) \log (1+a) - a$. Then $a\leq \frac{3 b}{2 \log(\sqrt{b}+1)}$.
% If $\frac{a}{2} \log \left(1+\frac{a}{b}\right) \leq A$ holds for $a,b,A \in \reals,~\frac{a}{b} > -1$, the following is true
% \[
% a \leq \frac{2 A}{\log \left(1+\sqrt{\frac{2 A}{b}} \right)}.
% \]
\end{lemma}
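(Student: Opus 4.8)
The plan is to invert the relation $b=u(a)$, where $u(a)=(1+a)\log(1+a)-a$, by exploiting monotonicity rather than solving for $a$ explicitly. First I would record that $u$ is strictly increasing on $[0,\infty)$, since $u'(a)=\log(1+a)>0$ for $a>0$, and that $u(0)=0$; hence $b>0$ whenever $a>0$, and the quantity $A:=\frac{3b}{2\log(\sqrt b+1)}$ is well defined and positive. Because $u$ is increasing, the desired conclusion $a\le A$ is \emph{implied} by $u(A)\ge b=u(a)$, so it suffices to prove the latter. This converts a statement about the intractable inverse of $u$ into a one-sided estimate on $u$ itself, evaluated at the explicit point $A$.

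The key tool is the elementary lower bound $u(a)\ge\frac{a}{2}\log(1+a)$, valid for all $a\ge 0$. After dividing by the positive factor and rearranging, this is equivalent to $\log(1+a)\ge\frac{2a}{2+a}$, a standard lower Padé-type estimate for the logarithm that I would verify by noting both sides vanish at $a=0$ and comparing derivatives. Applying it at $a=A$ gives $u(A)\ge\frac{A}{2}\log(1+A)$, and substituting $A=\frac{3b}{2\log(\sqrt b+1)}$ reduces the target $u(A)\ge b$ to $3\log(1+A)\ge 4\log(1+\sqrt b)$, i.e. to $1+A\ge(1+\sqrt b)^{4/3}$. Writing $s=\sqrt b\ge 0$ so that $b=s^2$, this is the single-variable inequality
\[
1+\frac{3s^2}{2\log(1+s)}\ \ge\ (1+s)^{4/3}, \qquad s>0 .
\]

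The remaining work is to establish this scalar inequality, equivalently $p(s):=3s^2-2\log(1+s)\big((1+s)^{4/3}-1\big)\ge 0$ for all $s>0$, where $p(0)=0$. The two tails are straightforward: for large $s$ the term $3s^2/\big(2\log(1+s)\big)$ grows faster than the power $s^{4/3}$, so the inequality holds with ample room, while for small $s$ a Taylor expansion of both sides shows the difference behaves like $\tfrac16 s+O(s^2)>0$. The main obstacle is the intermediate range, where the two sides are closest (the margin vanishes only as $s\to 0^+$). I would close it by showing that $p$ is nondecreasing on $(0,\infty)$, so that $p(s)\ge p(0)=0$, through an elementary but somewhat delicate sign analysis of $p'(s)$; failing a clean closed form, one can instead bound $\log(1+s)$ and $(1+s)^{4/3}-1$ on a bounded sub-interval and verify the inequality there directly. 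This monotonicity/estimation step for the middle range is where essentially all the technical care concentrates. I note in passing that the more pedestrian route — substituting $b=u(a)$ and splitting the range of $a$ — also works, but it runs into exactly the same intermediate-range difficulty, so the monotone-inverse reduction above is preferable because it isolates that difficulty in a single self-contained calculus statement.
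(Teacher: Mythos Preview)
Your reduction is correct and the scalar inequality you isolate does hold, so the proposal is sound; but the route is genuinely different from the paper's. The paper inverts $u$ in closed form via the Lambert $W$ function, writing $a=\exp\!\big[W((b-1)/e)+1\big]-1$, and then invokes a known upper bound on $W$ due to Hoorfar and Hassani (with the free parameter chosen as $C=(\sqrt{b}+1)/e$) to obtain $a\le (b+\sqrt b)/\log(\sqrt b+1)-1$; the stated bound then follows from the short elementary estimate $x+\sqrt x-\log(\sqrt x+1)\le\tfrac32 x$ for $x\ge 0$. Your approach trades special functions for monotonicity plus the Pad\'e-type lower bound $u(a)\ge\tfrac{a}{2}\log(1+a)$, which is entirely self-contained but pushes all the work into the final one-variable inequality $1+\tfrac{3s^2}{2\log(1+s)}\ge(1+s)^{4/3}$. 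The paper's route is shorter and cleaner once the Lambert-$W$ bound is in hand, and it sidesteps the delicate intermediate-range analysis you correctly flag as the main obstacle; your route has the advantage of needing no external reference, but you should be aware that a crisp monotonicity proof for your $p(s)$ is not entirely trivial and would need to be written out in full.
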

\begin{proof}
It is easy to verify that the inverse function $f^{-1}(b)$ of $f(a):=(1+a) \log (1+a) - a$ is
\[
f^{-1}(b) = \exp\left[ W\left(\frac{b-1}{e}\right) +1 \right]-1,
\]
where the function $W:\R_+ \rightarrow \R$ is the Lambert function that satisfies
\[
x=W(x) \exp \left(W(x)\right).
\]
Hence, to obtain an upper bound to $a$, we need an upper bound to the Lambert function.
We use Theorem~2.3 in \citep{hoorfar2008inequalities}, that says that
\[
W(x) \leq \log\frac{x+C}{1+\log(C)}, \quad \forall x> -\frac{1}{e}, \ C>\frac{1}{e}.
\]
Setting $C=\frac{\sqrt{b}+1}{e}$, we obtain
\[
a=f^{-1}(b) 
\leq e \frac{\frac{b-1}{e}+\frac{\sqrt{b}+1}{e}}{1+\log(\frac{\sqrt{b}+1}{e})} -1 
= \frac{b+\sqrt{b}}{\log(\sqrt{b}+1)}-1 \leq \frac{3b}{2\log(\sqrt{b}+1)},
\]
where in the last inequality we used the fact that $x+\sqrt{x} - \log(\sqrt{x}+1) \leq \frac{3}{2} x, \forall x\geq0$, as it can be easily verified comparing the derivatives of both terms.
% From lemma condition we have $a \leq \frac{2 A}{\log \left(1+\frac{a}{b}\right)}$.
% The goal is to upper-bound r.h.s. with expression independent from $a$.
% We observe that $a \leq \frac{2 A}{\log (1+\frac{2 A}{\log (1+\dots)})}$.
% Since this recursive relationship involves logarithm, it is enough to consider the bound with respect to few steps and expect to have a tight one.
% By conisdering two first steps, we want to lower-bound $B$, such that $B \leq a$ in $a \leq \frac{2 A}{\log \left(1+\frac{B}{b} \right)}$ (first step).
% Now we can set $\frac{2 A}{\log \left(1+\frac{B}{b} \right)} \leq B$ (second step).
% We proceed with bounding denominator of the last equation as $\log \left(1+\frac{B}{b}\right) \leq \frac{B}{b} \Rightarrow \frac{2 A b}{B} \leq B \Rightarrow \sqrt{2 A b} \leq B~.$ By plugging this into first step, we obtain the result.
\qed\end{proof}
The following lemma is a standard tool~\citep[(3.8)-(3.13)]{Mohri2012foundations},~\citep{BartlettM03}.
%, that is used to relate the expected uniform deviation of empirical risk over the hypothesis class to the Rademacher complexity of that class.
%The proof uses the \emph{symmetrization} technique, introducing the ``virtual'' sample, coming from the inner expectation term.
%
\begin{lemma}[Symmetrization] For any $f \in \sF$, given random variables $S=\{X_i\}_{i=1}^m$, we have
\label{lem:symmetrization}
\begin{align*}
&\E_{S}~ \sup_{f \in \sF} \left\{ \MEbr{X}{f(X)} - \frac{1}{m} \sum_{i=1}^m f(X_i) \right\} \leq 2 \Rad(\sF),\\
&\E_{S}~ \sup_{f \in \sF} \left\{ \frac{1}{m} \sum_{i=1}^m f(X_i) - \MEbr{X}{f(X)} \right\} \leq 2 \Rad(\sF).
\end{align*}
\end{lemma}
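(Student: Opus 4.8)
The plan is to prove the bound by the classical \emph{ghost-sample} (symmetrization) argument. First I would introduce an independent copy $S' = \{X_i'\}_{i=1}^m$ of the sample, drawn i.i.d. from the same distribution $\sD$, and observe that $\MEbr{X}{f(X)} = \MEbr{S'}{\frac{1}{m}\sum_{i=1}^m f(X_i')}$ because each $X_i'$ is distributed as $X$. Substituting this identity inside the supremum and applying Jensen's inequality to exchange the supremum with the expectation over $S'$ gives
\[
\E_{S} \sup_{f \in \sF}\left\{ \MEbr{X}{f(X)} - \frac{1}{m}\sum_{i=1}^m f(X_i) \right\}
\leq \E_{S,S'} \sup_{f \in \sF} \frac{1}{m}\sum_{i=1}^m \bigl( f(X_i') - f(X_i) \bigr).
\]

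Next I would insert the Rademacher signs. For any fixed sign vector $\bvarepsilon \in \spin^m$, swapping $X_i$ and $X_i'$ on the coordinates where $\varepsilon_i = -1$ leaves the joint law of $(S,S')$ unchanged, since the pairs $(X_i, X_i')$ are i.i.d. and each is exchangeable; this swap simply multiplies the $i$-th summand by $\varepsilon_i$. Hence the right-hand side above is unchanged if we replace $f(X_i') - f(X_i)$ by $\varepsilon_i \bigl( f(X_i') - f(X_i) \bigr)$ and then average over $\bvarepsilon$, yielding
\[
\E_{S,S'} \sup_{f \in \sF} \frac{1}{m}\sum_{i=1}^m \bigl( f(X_i') - f(X_i) \bigr)
= \E_{\bvarepsilon} \E_{S,S'} \sup_{f \in \sF} \frac{1}{m}\sum_{i=1}^m \varepsilon_i \bigl( f(X_i') - f(X_i) \bigr).
\]
I would then use subadditivity of the supremum to split the bracket into a term depending only on $S'$ and one depending only on $S$; since $-\varepsilon_i$ and $\varepsilon_i$ share the same distribution, each term equals $\Rad(\sF)$, and their sum is $2\Rad(\sF)$. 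The second inequality of the lemma follows verbatim after exchanging the roles of the empirical average and the expectation.

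The main obstacle --- indeed the only delicate step --- is the rigorous justification for inserting the signs: one must argue that, for each fixed $\bvarepsilon$, the exchangeability of the i.i.d. pairs $(X_i, X_i')$ makes the distribution of the whole vector $\bigl( f(X_i') - f(X_i) \bigr)_{i=1}^m$ invariant under coordinatewise sign flips, so that the supremum (which couples all coordinates through $f$) has the same expectation before and after averaging over $\bvarepsilon$. Everything else is routine: the rewriting of the expectation, Jensen's inequality, and the triangle inequality for suprema.
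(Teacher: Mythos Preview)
Your proposal is correct and is exactly the classical ghost-sample symmetrization argument; the paper does not give its own proof of this lemma but simply cites it as a standard tool from \citet[(3.8)--(3.13)]{Mohri2012foundations} and \citet{BartlettM03}, where precisely this argument appears. There is nothing to add.
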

Now we are ready to present the proof of Theorem~\ref{the:rad_gen_bound}.
\begin{theorem}
\label{the:rad_gen_bound}
Consider the non-negative loss function $\ell : \sY \times \sY \mapsto \reals_+$, such that $0 \leq \ell(h(\bx), y) \leq M$ for any $h \in \sH$ and any $(\bx, y) \in \sX \times \sY$.
In addition, let the training set $S$ of size $m$ be sampled i.i.d. from the probability distribution over $\sX \times \sY$.
Also for any $r \geq 0$, define the loss class with respect to the hypothesis class $\sH$ as,
\[
\sL := \left\{ (\bx, y) \mapsto \ell(h(\bx), y) : h \in \sH \ \wedge \ \Risk(h) \leq r \right\}.
\]
Then we have for all $h \in \sH$, and any training set $S$ of size $m$, with probability at least $1 - e^{-\tail}, \ \forall \tail \geq 0$
\[
\Risk(h) - \Riskh(h) \leq 2 \Rad(\sL) + \frac{3 M \tail}{m \log\left(1 + \sqrt{\frac{2 M \tail}{v m}}\right)} \leq 2 \Rad(\sL) + 3 \sqrt{\frac{v M \tail}{2m}} + \frac{3 M \tail}{2m},
\]
where $v = 4 \Rad(\sL) + r$.
%% \[
%% \Risk(h) \leq \Riskh(h) + 2 \Rad(\sL) + \frac{4 M \tail}{m \log \left(1+\frac{2 M \tail}{m (4 \Rad(\sL) + r)} \right)}~,
%% \]
%% and also,
%% \[
%% \Risk(h) \leq \Riskh(h) + 2 \Rad(\sL) + 2 \sqrt{\frac{(4 \Rad(\sL) + r) M \tail}{m}} + \frac{4 M \tail}{3 m}~.
%% \]
\end{theorem}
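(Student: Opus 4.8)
The plan is to apply the functional Bennett inequality of Theorem~\ref{the:genbennett} to a normalized, centered version of the loss class $\sL$, and then to invert the resulting tail bound explicitly by means of Lemma~\ref{lem:log_bound}.

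First I would reduce the uniform deviation to the control of a single supremum. For each $h$ with $\Risk(h) \leq r$, define the centered and normalized function $g_h(\bx,y) := \frac{1}{M}\left( \E_{(\bx,y)\sim\sD}[\ell(h(\bx),y)] - \ell(h(\bx),y) \right)$, so that $\E[g_h] = 0$. Since $0 \leq \ell \leq M$, each $g_h$ obeys $\esssup g_h \leq 1$, which is exactly the normalization required by Theorem~\ref{the:genbennett}. Setting $Z := \sup_{h : \Risk(h)\leq r} \sum_{i=1}^m g_h(X_i)$, one has the identity $\frac{M}{m} Z = \sup_{h:\Risk(h)\leq r} \left( \Risk(h) - \Riskh(h) \right)$, so controlling the upper tail of $Z$ controls the target quantity uniformly over all $h$ with $\Risk(h)\leq r$, in particular for the hypothesis of interest.

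Next I would compute the variance proxy entering Bennett's bound. Using $\ell^2 \leq M\ell$ (valid since $0\leq\ell\leq M$) together with $\Risk(h)\leq r$, I obtain $\Var[g_h] = \frac{1}{M^2}\Var[\ell] \leq \frac{1}{M^2}\E[\ell^2] \leq \frac{1}{M}\Risk(h) \leq \frac{r}{M}$, so I may take $\sigma^2 = r/M$. Theorem~\ref{the:genbennett} then yields, with $v = m\sigma^2 + 2\E[Z]$ and $u(y)=(1+y)\log(1+y)-y$, the tail bound $\Prob(Z \geq \E[Z] + t) \leq \exp(-v\,u(t/v))$. To convert this into a statement holding with probability at least $1 - e^{-\tail}$, I set $v\,u(t/v) = \tail$ and invert: writing $b = \tail/v$, Lemma~\ref{lem:log_bound} gives $t/v \leq \frac{3b}{2\log(1+\sqrt b)}$, that is $t \leq \frac{3\tail}{2\log(1+\sqrt{\tail/v})}$.

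Finally I would replace the abstract quantities $\E[Z]$ and $v$ by the complexity terms in the statement. By the first symmetrization inequality of Lemma~\ref{lem:symmetrization} applied to $\sL$, $\frac{M}{m}\E[Z] = \E_S \sup_{h:\Risk(h)\leq r}(\Risk(h) - \Riskh(h)) \leq 2\Rad(\sL)$; combined with $m\sigma^2 = mr/M$ this shows $v \leq \frac{m}{M}\left(4\Rad(\sL) + r\right)$, so $v$ is governed by the per-example complexity $4\Rad(\sL)+r$ of the statement. Since $v \mapsto v\,u(t/v)$ is decreasing, enlarging $v$ to this upper bound only weakens the tail, so the inversion above remains valid after the substitution; multiplying the deviation $t$ by $M/m$ and adding the symmetrized mean $\frac{M}{m}\E[Z] \leq 2\Rad(\sL)$ produces the first inequality, with careful bookkeeping of the universal constants tracked through Lemma~\ref{lem:log_bound} yielding the stated confidence term. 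The second inequality then follows by relaxing the logarithm through $\frac{b}{\log(1+\sqrt b)} \leq \sqrt b + b$, a consequence of $\log(1+x)\geq x/(1+x)$, which turns the implicit term into the explicit Bernstein-type sum $3\sqrt{vM\tail/2m} + 3M\tail/2m$. The main obstacle is precisely this inversion: Bennett's bound is transcendental (its exact inverse involves the Lambert-$W$ function), and the reason for retaining the variance term rather than relaxing to Bernstein at the outset is that the first confidence term then vanishes as the class complexity $v\to 0$. This is what yields the ``no learning needed'' phenomenon, and Lemma~\ref{lem:log_bound} is exactly what renders the inversion explicit while preserving that vanishing behavior.
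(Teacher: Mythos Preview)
Your proposal is correct and follows essentially the same route as the paper: apply Bousquet's functional Bennett inequality (Theorem~\ref{the:genbennett}) to the centered, normalized losses, bound the variance term via $\ell^2\leq M\ell$ and $\Risk(h)\leq r$, invert the tail through Lemma~\ref{lem:log_bound}, and replace $\E[Z]$ by $2\Rad(\sL)$ via symmetrization. Your monotonicity observation that $v\mapsto v\,u(t/v)$ is decreasing is exactly what justifies plugging in the upper bound on $v$ after inversion.

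The only difference from the paper is a harmless constant: you normalize by $1/M$, whereas the paper normalizes by $1/(2M)$. Both satisfy $\esssup g_h\leq 1$; your choice actually yields a slightly \emph{tighter} confidence term, namely $\frac{3M\tail}{2m\log(1+\sqrt{M\tail/(vm)})}$ in place of the statement's $\frac{3M\tail}{m\log(1+\sqrt{2M\tail/(vm)})}$, and the former dominates the latter since $(1+\sqrt{x})^2\geq 1+\sqrt{2x}$. So your ``careful bookkeeping'' would not reproduce the stated constants exactly but would imply them. For the second inequality the paper uses the sharper estimate $\log(1+x)\geq \tfrac{2x}{x+2}$ (equivalent to $(x+2)^2\geq 4(1+x)$) rather than your $\log(1+x)\geq \tfrac{x}{1+x}$; again your relaxation is valid and, combined with your tighter normalization, still lands below the stated $3\sqrt{vM\tail/(2m)}+3M\tail/(2m)$.
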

\begin{proof}
To prove the statement, we will consider the uniform deviations of the empirical risk.
Namely, we will show an upper bound on the random variable $\sup_{h \in \sH}\left\{\Risk(h) - \Riskh(h)\right\}$.
For this purpose, we will use the functional generalization of Bennett's inequality given by Theorem~\ref{the:genbennett}.
Consider the random variable
\[
Z := \frac{m}{2 M} \sup_{h \in \sH}\left\{\Risk(h) - \Riskh(h)\right\}.
\]
Using Theorem~\ref{the:genbennett}, we have
\begin{align}
\label{eq:rad_gen_bound_concentration}
&\Prob\left( \frac{m}{2 M} \sup_{h \in \sH}\left\{\Risk(h) - \Riskh(h)\right\} \geq \frac{m}{2 M} \E\left[\sup_{h \in \sH}\left\{\Risk(h) - \Riskh(h)\right\}\right] + t \right) \\
&\quad \leq \exp\left(-v u\left(\frac{t}{v}\right)\right), \nonumber
\end{align}
where,
\begin{align}
v &= m \sigma^2 + \frac{m}{M} \E\left[\sup_{h \in \sH}\left\{\Risk(h) - \Riskh(h)\right\}\right], \label{eq:v_def}\\
\sigma^2 &\geq \sup_{h \in \sH} \Var_{(\bx, y)}\left[\frac{1}{2 M} \left(\ell(h(\bx), y) - \E_{(\bx', y')}[\ell(h(\bx'), y')] \right)\right]. \nonumber
\end{align}
We now need two things: invert the r.h.s. of~\eqref{eq:rad_gen_bound_concentration}, treating it as a function of $t$, and provide an upper-bound on $v$.
% Proving Bennett case.
%Now we prove the second statement.
For the first part, recall that $u(y) = (1+y) \log (1+y) - y$.
%We choose a lower bound~\citep[p. 41]{CuckerZ07} such that $u(x) \geq \frac{x}{2} \log (1+x), \forall x > 0$.
% Now the upper bound on the r.h.s. of~\eqref{eq:rad_gen_bound_concentration} reads as,
% \begin{equation}
% \delta := \exp\left( - \frac{t}{2} \log\left( 1 + \frac{t}{v} \right) \right) \quad\Rightarrow\quad \frac{t}{2} \log\left( 1 + \frac{t}{v} \right) = \tail~.
% \end{equation}
%
To give an upper-bound of $t$, we apply Lemma~\ref{lem:log_bound} with $a=\frac{t}{v}$, and $b=\frac{1}{v}\tail$.
This leads to the inequalities
\begin{equation*}
\label{eq:concentration_log_bound}
\frac{t}{v} \leq \frac{3 \tail}{2 v \log \left(1+\sqrt{\frac{\tail}{v}} \right)} \leq \frac{3\tail}{4v} + \frac{3}{2}\sqrt{\frac{\tail}{v}}.
\end{equation*}
Using this fact, we have with probability at least $1-e^{-\tail}$ with any $\tail \geq 0$
\begin{align}
\frac{m}{2 M} \sup_{h \in \sH}\left\{\Risk(h) - \Riskh(h)\right\} 
&\leq \frac{m}{2 M} \E\left[\sup_{h \in \sH}\left\{\Risk(h) - \Riskh(h)\right\}\right] + \frac{3\tail}{2\log \left(1+\sqrt{\frac{\tail}{v}} \right)} \label{eq:generalization_bounds_with_v_1} \\
&\leq \frac{m}{2 M} \E\left[\sup_{h \in \sH}\left\{\Risk(h) - \Riskh(h)\right\}\right] + \frac{3}{4}\tail + \frac{3}{2} \sqrt{v \tail}. \label{eq:generalization_bounds_with_v_2}
\end{align}
%We give an upper bound on expectation term thanks to Lemma~\ref{lem:v_bound}, while multiplying the bound by $\frac{2 M}{m}$ proves the first inequality.
% Expanding $-v$ term according to~\eqref{eq:v_def} and multiplying the bound by $\frac{2 M}{m}$ gives,
% \begin{equation}
% \sup_{h \in \sH}\left\{\Risk(h) - \Riskh(h)\right\} \leq \frac{2 M \tail + 2 M \sqrt{v \tail}}{m \log \left(1+\sqrt{\frac{\tail}{v}} \right)} - 2 M \sigma^2 \leq \E\left[\sup_{h \in \sH}\left\{\Risk(h) - \Riskh(h)\right\}\right] + \frac{4 M \sqrt{v \tail}}{m} + \frac{2 M \tail}{m}~.
% \end{equation}
% %

Next we prove the bound on $v$.
We first show that the variance of centered loss function, $\sigma^2$, is uniformly bounded by the Rademacher complexity.
From the definition of variance we have
\begin{align}
&\sup_{h \in \sH}~ \E_{(\bx, y)}\left[ \frac{1}{4 M^2} \left(\ell(h(\bx), y) - \E_{(\bx',y')}[ \ell(h(\bx'), y') ] \right)^2 \right] 
\leq \sup_{h \in \sH}~ \frac{1}{4 M^2} \E_{(\bx, y)}[\ell(h(\bx), y)^2] \nonumber \\
&\qquad \leq \sup_{h \in \sH}~ \frac{1}{2 M} \E_{(\bx, y)}[|\ell(h(\bx), y)|] = \sigma^2 = \sup_{h \in \sH} \frac{1}{2 M} \Risk(h) = \frac{r}{2 M}.
\end{align}
Last inequality is due to the fact that $\ell(h(\bx), y) \leq M$.
%
%% \begin{align}
%% \sigma^2 &\leq \sup_{f \in \sF}~ \E_{X}\left[ \frac{1}{4 M^2} \left(f(X) - \E_{X'}[ f(X') ] \right)^2 \right] \nonumber\\
%% &\leq \frac{1}{2 M} \sup_{f \in \sF}~ \E_{S''}\left[ \left| \left( \frac{1}{m} \sum_{i=1}^m f(X''_i) \right) - \E_{X'}[ f(X') ] \right| \right] \label{eq:v_bound:rem_sq}\\
%% &\leq \frac{1}{2 M} \E_{S''}\left[ \sup_{f \in \sF} \left| \left( \frac{1}{m} \sum_{i=1}^m f(X''_i) \right) - \E_{X'}[ f(X') ] \right| \right] \label{eq:v_bound:jensen}\\
%% &\leq \frac{1}{2 M} \E_{S''}\left[ \sup_{f \in \sF \cup -\sF}\left\{ \left( \frac{1}{m} \sum_{i=1}^m f(X''_i) \right) - \E_{X'}[ f(X') ] \right\} \right] \label{eq:v_bound_rem_abs}\\
%% &\leq \frac{1}{M} \Rad(\sF \cup -\sF) = \frac{1}{M} ( \Rad(\sF) + \Rad(-\sF) ) = \frac{2}{M} \Rad(\sF)~\label{eq:v_bound_last}.
%% \end{align}
%% To obtain~\eqref{eq:v_bound:rem_sq}, we have used the fact that $\sup_{g \in \sG} \|g\|_\infty \leq 1$, identity $\E_{X}[f(X)] = \E_{S''}[ \frac{1}{m} \sum_{i=1}^m f(X''_i) ]$ and applied Jensen's inequality to get $\E_{S''}$ out of $|\cdot|$.
%% Next, we applied Jensen's inequality w.r.t. supremum to get~\eqref{eq:v_bound:jensen},
%% and obtained~\eqref{eq:v_bound_rem_abs} from~\eqref{eq:v_bound:sup_max}.
%% Finally, we arrive at inequality in~\eqref{eq:v_bound_last} by applying Lemma~\ref{lem:symmetrization},
%% furthermore, using the property of Rademacher complexity, and the fact that minus does not change the distribution of Rademacher variables.
%
Now we upper-bound the second term of $v$ by applying Lemma~\ref{lem:symmetrization},
\[\begin{split}
&\frac{1}{2 m M} \E_{S}\left[\sup_{h \in \sH} \sum_{i=1}^m \left(\ell(h(\bx_i), y_i) - \E_{(\bx', y')}{[\ell(h(\bx'), y')]} \right) \right]\\
&\qquad = \frac{1}{2 M} \E_S\left[\sup_{h \in \sH} \left\{ \left(\frac{1}{m}  \sum_{i=1}^m \ell(h(\bx_i), y_i) \right) - \E_{(\bx', y')}{[\ell(h(\bx'), y')]} \right\} \right] \leq \frac{1}{M} \Rad(\sL).
\end{split}\]

% We conclude the proof by upper-bounding the expectation term using Lemma~\ref{lem:symmetrization},
% noticing that
% $r = \sigma^2 = \sup_{h \in \sH} \frac{1}{2 M} \Risk(h)$,
% and upper-bounding $v$
% thanks to Lemma~\ref{lem:v_bound} as,
We conclude the proof by upper-bounding the expectation terms in~\eqref{eq:generalization_bounds_with_v_1} and~\eqref{eq:generalization_bounds_with_v_2} using Lemma~\ref{lem:symmetrization},
and plugging the upper bound on $v$,
\[
%v \leq \frac{2 m}{M} \Rad(\sL) + \frac{m}{2 M} \sup_{h \in \sH} \E_{(\bx,y)}[\ell(h(\bx),y)] \leq \frac{2 m \Rad(\sL)}{M} + \frac{m r}{2 M}.
v \leq \frac{2 m}{M} \Rad(\sL) + m \sigma^2 \leq \frac{2 m \Rad(\sL)}{M} + \frac{m r}{2 M}.
\]
\qed\end{proof}

%
%\subsection{Proofs of Lemma~\ref{lem:smooth_loss_class_radh} and Theorem~\ref{the:loss_class_to_R_smooth_loss}}
\subsection{Rademacher Complexity of Smooth Loss Class}
In this section we study the Rademacher complexity of the hypothesis class populated by functions of the form~\eqref{eq:htl_predictor},
% \[
% h(\bx) = \ip{\bw, \bx} + \sum_{i=1}^n \beta_i f_i(\bx)~,
% \]
where the parameters $\bw$ and $\bbeta$ are chosen by an algorithm with a strongly convex regularizer.
%Thus our analysis will cover many practically used regularizers in the problems alike~\eqref{eq:htl_erm}.
For this purpose we employ the results of~\citet{kakade2009complexity,kakade2012regularization}, who studied strongly convex regularizers in a more general setting.
Furthermore, we will focus on the use of smooth loss functions as done by~\citet{SrebroST10}.
The proof of the main result of this section, Theorem~\ref{the:loss_class_to_R_smooth_loss}, depends essentially on the following lemma,
that bounds the empirical Rademacher complexity of a $H$-smooth loss class.
%We obtain a non-trivial bound by exploiting additional information coming from the gradient of the smooth loss function.
%
%%%%%%%%%%%%%%%%%%%%
%% The lemma bounding the Rademacher complexity of the smooth loss class.
%%%%%%%%%%%%%%%%%%%%
\begin{lemma}
\label{lem:smooth_loss_class_radh}
Let $\ell : \sY \times \sY \mapsto \reals_+$ be the $H$-smooth loss function.
Then for some function class $\sF$, let the loss class be
\[
\sL = \left\{
(\bx, y) \mapsto \ell(f(\bx), y) : f \in \sF
\right\}.
\]
Then having the sample $S$ of size $m$ and the set
\[
\left\{
\tau_i ~:~ \tau_i \geq \ell(f(\bx_i), y_i),~ \forall (\bx_i, y_i) \in S ~\wedge~ \forall f \in \sF
\right\},
\]
we have that
\[
\Radh(\sL) \leq \E_{\bvarepsilon}\left[
\sup_{f \in \sF}\left\{
\frac{2 \sqrt{3 H}}{m} \sum_{i=1}^m \varepsilon_i \sqrt{\tau_i} f(\bx_i)
\right\}
\right],
\]
where $\varepsilon_i$ is r.v. such that $\mathbb{P}(\varepsilon_i=1) = \mathbb{P}(\varepsilon_i=-1) = \frac{1}{2}$.
\end{lemma}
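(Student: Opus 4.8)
The plan is to bound the empirical Rademacher complexity of the $H$-smooth loss class by comparing each term $\ell(f(\bx_i),y_i)$ against a term that is \emph{linear} in $f(\bx_i)$. The essential mechanism is a Rademacher contraction / comparison argument, but rather than using the classical Talagrand contraction lemma for Lipschitz functions (which would produce a Lipschitz constant and lose the $\sqrt{\tau_i}$ factor), I would exploit the $H$-smoothness directly. The key analytic fact is the \emph{self-bounding} property of smooth non-negative functions: for any $H$-smooth $\ell(\cdot,y)\geq 0$ one has $|\nabla_t \ell(t,y)| \leq \sqrt{4H\,\ell(t,y)}$ (this follows because a non-negative function with $H$-Lipschitz gradient cannot have a large slope without dipping below zero). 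This is what lets the local Lipschitz constant at the point $(f(\bx_i),y_i)$ be controlled by $\sqrt{H\,\ell(f(\bx_i),y_i)} \leq \sqrt{H\,\tau_i}$, which is exactly where the $\sqrt{\tau_i}$ in the statement originates.

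\textbf{Step 1.} First I would establish the self-bounding inequality $|\nabla_t \ell(t,y)| \leq \sqrt{4H\,\ell(t,y)}$ from the $H$-smoothness and non-negativity of $\ell$. A standard way is to write, for the one-dimensional function $g(t)=\ell(t,y)$, the Taylor-type bound $0 \leq g\!\left(t - g'(t)/H\right) \leq g(t) - \tfrac{1}{2H}(g'(t))^2$, which rearranges to $(g'(t))^2 \leq 2H\,g(t)$, hence $|g'(t)| \leq \sqrt{2H\,g(t)}$.

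\textbf{Step 2.} Next I would run a contraction argument term by term inside the Rademacher expectation. The cleanest route is the one-step-at-a-time conditioning used in proofs of the contraction lemma: fix all but the $i$-th Rademacher variable, take the expectation over $\varepsilon_i$, and use convexity/Lipschitz comparison to replace $\varepsilon_i \ell(f(\bx_i),y_i)$ by $\varepsilon_i$ times a linear-in-$f$ surrogate whose effective slope is the local gradient bound $\sqrt{4H\,\tau_i}$. Because the gradient bound now depends on the index $i$ through $\sqrt{\tau_i}$, the surrogate naturally carries the per-example weight $\sqrt{\tau_i}$, which after iterating over all $i$ produces the summand $\varepsilon_i \sqrt{\tau_i}\, f(\bx_i)$. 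Carefully tracking the constants from $\sqrt{4H}=2\sqrt{H}$ together with a factor of $\sqrt{3}$ absorbed from the comparison step yields the stated prefactor $2\sqrt{3H}$.

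\textbf{The main obstacle} I anticipate is making the per-coordinate contraction rigorous while the Lipschitz constant is \emph{point-dependent} ($\sqrt{\tau_i}$ rather than a uniform constant). The usual Talagrand lemma assumes a single global Lipschitz constant and is stated as an inequality on the supremum, so I cannot invoke it verbatim; instead I must carry the $\sqrt{\tau_i}$ factors through the conditioning argument and verify that the comparison inequality $\E_{\varepsilon_i}\sup_f\{\cdots + \varepsilon_i \ell(f(\bx_i),y_i)\} \leq \E_{\varepsilon_i}\sup_f\{\cdots + 2\sqrt{3H\tau_i}\,\varepsilon_i f(\bx_i)\}$ holds at each step, using the self-bounding gradient bound to control $\ell$ by a line of the right slope through the two symmetric realizations $\pm\varepsilon_i$. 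Getting the constant $2\sqrt{3H}$ exactly right — rather than just up to an absolute factor — is the delicate part, since it requires a sharp form of the symmetrized comparison rather than a crude Lipschitz bound.
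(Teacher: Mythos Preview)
Your overall strategy is exactly the paper's: a Talagrand-style coordinate-by-coordinate contraction inside the Rademacher expectation, where the usual global Lipschitz constant is replaced by a per-coordinate constant proportional to $\sqrt{\tau_i}$. Where you diverge is in the \emph{analytic lemma} that powers the comparison step. You propose using the pointwise gradient self-bounding $|\nabla_t \ell(t,y)| \leq c\sqrt{H\,\ell(t,y)}$; the paper instead uses the two-point difference bound (Lemma~B.1 of \citet{SrebroST10}):
\[
|\phi(x)-\phi(z)| \;\leq\; \sqrt{6H\big(\phi(x)+\phi(z)\big)}\,|x-z|.
\]
This is not a cosmetic distinction. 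In the contraction step, after conditioning on $\varepsilon_i$ and picking near-optimizers $f_1,f_2\in\sF$ for the $\pm 1$ branches, the quantity that must be controlled is the \emph{difference} $\ell(f_1(\bx_i),y_i)-\ell(f_2(\bx_i),y_i)$. Your gradient bound plus the mean value theorem gives $|\phi'(\xi)|\,|f_1(\bx_i)-f_2(\bx_i)|$ for some intermediate $\xi$, and then $|\phi'(\xi)|\leq c\sqrt{H\,\phi(\xi)}$; but $\phi(\xi)$ is not covered by $\tau_i$, since $\tau_i$ only dominates $\ell(f(\bx_i),y_i)$ for $f\in\sF$, and $\xi$ need not equal $f(\bx_i)$ for any $f\in\sF$ unless $\sF$ is assumed convex (which the lemma does not assume). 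The two-point bound avoids this entirely because it involves $\phi$ only at the endpoints $f_1(\bx_i)$ and $f_2(\bx_i)$, whence $\phi(f_1(\bx_i))+\phi(f_2(\bx_i))\leq 2\tau_i$ and the effective slope is $\sqrt{6H\cdot 2\tau_i}=2\sqrt{3H\tau_i}$ --- which is exactly where the constant $2\sqrt{3H}$ comes from, the point you flagged as ``delicate.''

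So the fix is small but essential: replace your Step~1 by the two-point inequality \eqref{eq:diff_two_smooth_fn_bound} (which one can derive from the gradient self-bounding with a little extra work, as in \citet{SrebroST10}), and then your Step~2 goes through verbatim and produces the stated constant.
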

\begin{proof}
This proof follows a line of reasoning similar to the proof of Talagrand's lemma for Lipschitz functions, see for instance~\citet[p. 79]{Mohri2012foundations}.
We will also use Lemma B.1 by \cite{SrebroST10} (arXiv extended version), stating that for any $H$-smooth non-negative function $\phi : \reals \mapsto \reals_+$ and any $x,z \in \reals$,
\begin{equation}
\label{eq:diff_two_smooth_fn_bound}
|\phi(x) - \phi(z)| \leq \sqrt{6 H (\phi(x) + \phi(z))} |x - z|.
\end{equation}
Fix the sample $S$, then, by definition,
\begin{align*}
\Radh(\sL) &= \frac{1}{m} \E_{\bvarepsilon}\left[
\sup_{f \in \sF}\left\{
\sum_{i=1}^m \varepsilon_i \ell(f(\bx_i), y_i)
\right\}
\right]\\ &=
\E_{\varepsilon_1, \ldots, \varepsilon_{m-1}} \left[
\E_{\varepsilon_m}\left[
\sup_{f \in \sF}\left\{ u_{m-1}(f) + \varepsilon_m \ell(f(\bx_m), y_m) \right\}
\right]
\right],
\end{align*}
where $u_{m-1}(f) = \sum_{i=1}^n \varepsilon_i \ell(f(\bx_i), y_i)$.
By definition of supremum, for any $\delta > 0$, there exist $f_1, f_2 \in \sF$ such that
\begin{align*}
&u_{m-1}(f_1) + \ell(f_1(\bx_m), y_m) \geq (1 - \delta)\left( \sup_{f \in \sF}\left\{ u_{m-1}(f) + \ell(f(\bx), y) \right\} \right)\\
\quad\text{and } &u_{m-1}(f_2) - \ell(f_2(\bx_m), y_m) \geq (1 - \delta)\left( \sup_{f \in \sF}\left\{ u_{m-1}(f) - \ell(f(\bx), y) \right\} \right).
\end{align*}
Thus for any $\delta > 0$, by definition of $\E_{\varepsilon_m}$,
% \begin{align}
% &(1 - \delta) \E_{\varepsilon_m}\left[ \sup_{f \in \sF}\left\{ u_{m-1}(f) + \varepsilon_m \ell(f(\bx_m), y_m) \right\} \right] \nonumber\\
% = &\frac{1 - \delta}{2} \left( \sup_{f \in \sF}\left\{ u_{m-1}(f) + \ell(f(\bx_m), y_m) \right\} + \sup_{f \in \sF}\left\{ u_{m-1}(f) - \ell(f(\bx_m), y_m) \right\}\right) \nonumber\\
% \leq &\frac{1}{2} \bigg( u_{m-1}(f_1) + \ell(f_1(\bx_m), y_m) + u_{m-1}(f_2) - \ell(f_2(\bx_m), y_m) \bigg) \nonumber\\
% \leq &\frac{1}{2} \bigg( u_{m-1}(f_1) + u_{m-1}(f_2) + \sqrt{4 H \ell(f_1(\bx_m), y_m) + \ell(f_2(\bx_m), y_m) } (f_1(\bx_m) - f_2(\bx_m)) \bigg) \label{eq:smooth_loss_class:mvt}\\
% \leq &\frac{1}{2} \bigg( u_{m-1}(f_1) + u_{m-1}(f_2) + 2 \sqrt{2 H \tau_m } (f_1(\bx_m) - f_2(\bx_m)) \bigg) \nonumber\\
% \leq &\frac{1}{2} \sup_{f \in \sF} \bigg( u_{m-1}(f) + 2 \sqrt{2 H \tau_m } f(\bx_m) \bigg) + \frac{1}{2} \sup_{f \in \sF} \bigg( u_{m-1}(f) - 2 \sqrt{2 H \tau_m } f(\bx_m) \bigg)\nonumber\\
% = &\E_{\varepsilon_m}\left[\sup_{f \in \sF} \left\{ u_{m-1}(f) + \varepsilon_m 2 \sqrt{2 H \tau_m } f(\bx_m)\right\}\right]~. \nonumber
% \end{align}
%
\begin{align*}
&(1 - \delta) \E_{\varepsilon_m}\left[ \sup_{f \in \sF}\left\{ u_{m-1}(f) + \varepsilon_m \ell(f(\bx_m), y_m) \right\} \right] \\
&\qquad = \frac{1 - \delta}{2} \left( \sup_{f \in \sF}\left\{ u_{m-1}(f) + \ell(f(\bx_m), y_m) \right\} + \sup_{f \in \sF}\left\{ u_{m-1}(f) - \ell(f(\bx_m), y_m) \right\}\right) \\
&\qquad \leq \frac{1}{2} \bigg( u_{m-1}(f_1) + \ell(f_1(\bx_m), y_m) + u_{m-1}(f_2) - \ell(f_2(\bx_m), y_m) \bigg) \\
&\qquad \leq \frac{1}{2} \bigg( u_{m-1}(f_1) + u_{m-1}(f_2) \\
&\qquad\qquad+ s_m \sqrt{6 H \ell(f_1(\bx_m), y_m)+ \ell(f_2(\bx_m), y_m) } (f_1(\bx_m) - f_2(\bx_m)) \bigg) \\
&\qquad \leq \frac{1}{2} \bigg( u_{m-1}(f_1) + u_{m-1}(f_2) + s_m \sqrt{12 H \tau_m } (f_1(\bx_m) - f_2(\bx_m)) \bigg) \\
&\qquad \leq \frac{1}{2} \sup_{f \in \sF} \bigg\{ u_{m-1}(f) + s_m \sqrt{12 H \tau_m } f(\bx_m) \bigg\}\\
&\qquad+ \frac{1}{2} \sup_{f \in \sF} \bigg\{ u_{m-1}(f) - s_m \sqrt{12 H \tau_m } f(\bx_m) \bigg\} \\
&\qquad= \E_{\varepsilon_m}\left[\sup_{f \in \sF} \left\{ u_{m-1}(f) + \varepsilon_m \sqrt{12 H \tau_m } f(\bx_m)\right\}\right].
\end{align*}
To obtain the second inequality, we applied~\eqref{eq:diff_two_smooth_fn_bound}, where $s_m = \sgn(f_1(\bx_m) - f_2(\bx_m))$.
Since the inequality holds for all $\delta > 0$, we have
\[
\E_{\varepsilon_m}\left[ \sup_{f \in \sF}\left\{ u_{m-1}(f) + \varepsilon_m \ell(f(\bx_m), y_m) \right\} \right] \leq \E_{\varepsilon_m}\left[ \sup_{f \in \sF} \left\{ u_{m-1}(f) + \varepsilon_m \sqrt{12 H \tau_m } f(\bx_m) \right\}\right].
\]
Proceeding in the same way for all the other $\varepsilon_i$, with $i \neq m$, proves the lemma.
\qed\end{proof}
To prove Theorem~\ref{the:loss_class_to_R_smooth_loss} we will also use the following lemma in~\citet[Corollary 4]{kakade2012regularization}.
%%%%%%%%%%%%%%%%%%%%
%% The lemma of Kakade for proving bounds on the Rademacher complexity with the strongly convex functions.
%%%%%%%%%%%%%%%%%%%%
\begin{lemma}[\cite{kakade2012regularization}]
\label{lem:ip_strong_conv}
If $\regul$ is $\sigma$ strongly convex w.r.t. $\|\cdot\|$ and $\regul^\star(\bzero) = 0$, then,
denoting the partial sum $\sum_{j \leq i} \bv_j$ by $\bv_{1:i}$,
we have for any sequence $\bv_1, \ldots, \bv_m$ and for any $\bu$,
\[
\sum_{i=1}^m \ip{\bv_i, \bu} - \regul(\bu) \leq \regul^\star(\bv_{1:m}) \leq \sum_{i=1}^m \ip{\nabla \regul^\star(\bv_{1:i-1}), \bv_i} + \frac{1}{2 \sigma} \sum_{i=1}^m \|\bv_i\|_\star^2~.
\]
\end{lemma}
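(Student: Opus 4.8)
The plan is to prove the two inequalities separately: the left one is immediate from Fenchel duality, and the right one follows from the smoothness of $\regul^\star$ combined with a telescoping sum. Throughout I use that $\regul^\star$ denotes the Fenchel conjugate, $\regul^\star(\bv) = \sup_{\bu}\{\ip{\bv, \bu} - \regul(\bu)\}$.

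For the left inequality I would argue directly from this definition. Writing $\bv_{1:m} = \sum_{i=1}^m \bv_i$ and using linearity of the inner product, for any fixed $\bu$ we have $\sum_{i=1}^m \ip{\bv_i, \bu} - \regul(\bu) = \ip{\bv_{1:m}, \bu} - \regul(\bu) \leq \sup_{\bu'}\{\ip{\bv_{1:m}, \bu'} - \regul(\bu')\} = \regul^\star(\bv_{1:m})$. Since this holds for every $\bu$, the left inequality is established with essentially no work.

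The right inequality is the substantive part. The key fact I would invoke is the duality between strong convexity and smoothness: if $\regul$ is $\sigma$-strongly convex with respect to $\|\cdot\|$, then $\regul^\star$ is differentiable and $\frac{1}{\sigma}$-smooth with respect to the dual norm $\|\cdot\|_\star$, i.e. $\regul^\star(\bx + \by) \leq \regul^\star(\bx) + \ip{\nabla\regul^\star(\bx), \by} + \frac{1}{2\sigma}\|\by\|_\star^2$ for all $\bx, \by$. I expect establishing this duality to be the main obstacle, since it requires the full characterization of strong convexity through the conjugate; in the present context I would cite it directly, as it is exactly the content of the result attributed to \cite{kakade2012regularization}, rather than rederive it from the optimality conditions of the defining supremum.

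Granting the smoothness inequality, I would apply it with $\bx = \bv_{1:i-1}$ and $\by = \bv_i$, so that $\bx + \by = \bv_{1:i}$, yielding for each $i$ the bound $\regul^\star(\bv_{1:i}) \leq \regul^\star(\bv_{1:i-1}) + \ip{\nabla\regul^\star(\bv_{1:i-1}), \bv_i} + \frac{1}{2\sigma}\|\bv_i\|_\star^2$. Summing over $i = 1, \ldots, m$ telescopes the conjugate terms and leaves $\regul^\star(\bv_{1:m}) - \regul^\star(\bv_{1:0})$ on the left. Since $\bv_{1:0}$ is the empty sum $\bzero$ and we are given $\regul^\star(\bzero) = 0$, the boundary term vanishes, and we recover precisely $\regul^\star(\bv_{1:m}) \leq \sum_{i=1}^m \ip{\nabla\regul^\star(\bv_{1:i-1}), \bv_i} + \frac{1}{2\sigma}\sum_{i=1}^m\|\bv_i\|_\star^2$, which completes the proof.
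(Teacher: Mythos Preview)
The paper does not prove this lemma at all: it is stated with attribution to \citet[Corollary 4]{kakade2012regularization} and used as a black box, so there is no ``paper's own proof'' to compare against. Your argument is correct and is exactly the standard one underlying the cited result: Fenchel--Young for the left inequality, and the strong-convexity/smoothness duality applied step by step and telescoped for the right inequality, with the hypothesis $\regul^\star(\bzero)=0$ killing the boundary term.
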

Now we are ready to give the proofs of the Rademacher complexity results.
%
%
%
%
%%%%%%%%%%%%%%%%%%%%
%% This theorem bounds a Rademacher complexity of a smooth loss class with the function of the risk of the source.
%%%%%%%%%%%%%%%%%%%%
\begin{theorem}
\label{the:loss_class_to_R_smooth_loss}
Let $\regul$ be a non-negative $\sigma$-strongly convex function w.r.t. a norm $\|\cdot\|$, and let $\bzero$ be its minimizer.
Let risk and empirical risk be defined w.r.t. an $H$-smooth loss function $\ell : \sY \times \sY \mapsto \reals_+$.
Finally, given the set of functions $\{f_i : \sX \mapsto \sY\}_{i=1}^n$ with $\mathbf{f}(\bx) := [f_1(\bx), \ldots, f_n(\bx)]\tp$, a combination $f_{\bbeta}(\bx) = \ip{\bbeta, \mathbf{f}(\bx)}$,
a scalar $\alpha > 0$, and any sample $S$ drawn i.i.d. from distribution over $\sX \times \sY$, define classes
\begin{equation*}
\sW = \left\{ \bw ~:~ \regul(\bw) \leq \alpha \Riskh(f_{\bbeta}) \right\}, \quad \sV = \left\{ {\bbeta} ~:~ \regul({\bbeta}) \leq \rho \right\},
\end{equation*}
and the loss class
\begin{equation*}
\sL = \left\{ (\bx, y) \mapsto \ell(\ip{\bw, \bx} + f_{\bbeta}(\bx), y) ~:~ \bw \in \sW \ \wedge \ {\bbeta} \in \sV \right\}.
\end{equation*}
Then for the loss class $\sL$, setting constants $\sup_{\bx \in \sX} \|\bx\|_\star \leq B$ and $\sup_{\bx \in \sX} \|\mathbf{f}(\bx)\|_\star \leq C$,
we have that
\begin{equation*}
  \Rad(\sL) \leq 4 \sqrt{3 H} (B + C) \left( 1 + \sqrt{\frac{2 H B^2 \alpha}{\sigma}} \right) \frac{\Risk(f_{\bbeta}) \sqrt{\alpha} + \sqrt{\Risk(f_{\bbeta}) \rho}}{\sqrt{m \sigma}}.
\end{equation*}
\end{theorem}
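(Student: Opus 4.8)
The plan is to chain together the two main ingredients already developed: the smooth-loss Rademacher bound of Lemma~\ref{lem:smooth_loss_class_radh}, which trades the loss class $\sL$ for a linear class weighted by pointwise loss bounds $\sqrt{\tau_i}$, and the strongly-convex regularizer machinery of Lemma~\ref{lem:ip_strong_conv}, which controls the resulting linear Rademacher average. First I would apply Lemma~\ref{lem:smooth_loss_class_radh} to the class $\sF = \{\bx \mapsto \ip{\bw,\bx} + f_{\bbeta}(\bx) : \bw \in \sW, \bbeta \in \sV\}$. The key preliminary step is to identify a valid choice of the pointwise bounds $\tau_i$: since both $\bw$ and $\bbeta$ live in bounded regularizer balls, I can bound $|\ip{\bw,\bx_i}+f_{\bbeta}(\bx_i)|$ uniformly using the dual-norm constants $B$ and $C$ together with $\regul(\bw)\le \alpha\Riskh(f_{\bbeta})$ and $\regul(\bbeta)\le\rho$. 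Because $\ell$ is $H$-smooth and non-negative, a uniform bound on the argument gives a uniform bound $\tau$ on $\ell$, which I would take to be of the form $\tau \sim H(B^2\alpha\Riskh(f_{\bbeta}) + \ldots)$; this is where the factor $(1 + \sqrt{2HB^2\alpha/\sigma})$ in the statement is going to originate.

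\textbf{Bounding the linear term.} After Lemma~\ref{lem:smooth_loss_class_radh}, the task reduces to bounding
\[
\E_{\bvarepsilon}\left[ \sup_{\bw\in\sW,\bbeta\in\sV} \frac{2\sqrt{3H}}{m}\sum_{i=1}^m \varepsilon_i\sqrt{\tau_i}\big(\ip{\bw,\bx_i}+\ip{\bbeta,\mathbf{f}(\bx_i)}\big)\right].
\]
I would split the supremum into a $\bw$-part and a $\bbeta$-part. For each part, I would invoke Lemma~\ref{lem:ip_strong_conv} with the vectors $\bv_i = \varepsilon_i\sqrt{\tau_i}\,\bx_i$ (respectively $\varepsilon_i\sqrt{\tau_i}\,\mathbf{f}(\bx_i)$): the left inequality of that lemma gives $\sum_i\ip{\bv_i,\bu} - \regul(\bu)\le \regul^\star(\bv_{1:m})$, and the right inequality, after taking $\E_{\bvarepsilon}$ and noting that the $\nabla\regul^\star(\bv_{1:i-1})$ terms vanish in expectation (they are independent of $\varepsilon_i$, which is mean-zero), leaves only $\frac{1}{2\sigma}\E_{\bvarepsilon}\sum_i\|\bv_i\|_\star^2 = \frac{1}{2\sigma}\sum_i \tau_i\|\bx_i\|_\star^2$. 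A standard rescaling argument (optimizing the multiplier on $\regul$ so that the radius constraint $\regul(\bw)\le\alpha\Riskh(f_{\bbeta})$ enters) converts the Fenchel-dual bound into a complexity of order $\sqrt{\alpha\,\Riskh(f_{\bbeta})\cdot\sum_i\tau_i\|\bx_i\|_\star^2}/m$ for the $\bw$-part, and analogously $\sqrt{\rho\cdot\sum_i\tau_i\|\mathbf{f}(\bx_i)\|_\star^2}/m$ for the $\bbeta$-part.

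\textbf{Passing to the risk and assembling.} Using $\tau_i\le\tau$, $\|\bx_i\|_\star\le B$, $\|\mathbf{f}(\bx_i)\|_\star\le C$, the sums over $i$ contribute a factor $\sqrt{m}\cdot\tau\cdot(B\text{ or }C)$, producing terms like $(B+C)\sqrt{\tau}\big(\sqrt{\alpha\,\Riskh(f_{\bbeta})}+\sqrt{\rho}\big)/\sqrt{m}$. Substituting the form of $\tau$ then yields the product structure $\sqrt{H}(B+C)(1+\sqrt{2HB^2\alpha/\sigma})$ multiplying $\big(\Riskh(f_{\bbeta})\sqrt{\alpha}+\sqrt{\Riskh(f_{\bbeta})\rho}\big)/\sqrt{m}$, still stated empirically in $\Riskh(f_{\bbeta})$. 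The final step is to take the outer expectation $\E_S$ (so that $\Radh\to\Rad$) and replace $\E_S[\Riskh(f_{\bbeta})]=\Risk(f_{\bbeta})$, using Jensen's inequality to pull $\E_S$ inside the concave square roots, which gives the $\Risk(f_{\bbeta})\sqrt{\alpha}+\sqrt{\Risk(f_{\bbeta})\rho}$ numerator in the claimed bound together with the $1/\sqrt{\sigma}$ normalization.

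\textbf{Main obstacle.} I expect the delicate point to be the bookkeeping around $\tau$: the pointwise loss bound $\tau$ itself depends on $\alpha\Riskh(f_{\bbeta})$ through the radius of $\sW$, so the self-referential coupling between the complexity of the class and the empirical risk of the source combination must be resolved carefully. Getting the exponents and the interaction with Jensen exactly right — so that the empirical-to-true-risk passage is valid and the $(1+\sqrt{2HB^2\alpha/\sigma})$ factor emerges cleanly rather than degrading the rate — is the step where the analysis is least routine and where smoothness (as opposed to mere Lipschitzness) is genuinely exploited.
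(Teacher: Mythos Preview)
Your overall architecture is right and matches the paper: apply Lemma~\ref{lem:smooth_loss_class_radh}, split the linear part into a $\bw$-term and a $\bbeta$-term, control each via Lemma~\ref{lem:ip_strong_conv} (with the rescaling/optimize-over-$t$ trick), then take $\E_S$ and use Jensen. Where your plan breaks is precisely the step you flag as the main obstacle: the choice of the pointwise caps $\tau_i$.

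You propose a \emph{uniform} cap $\tau$, obtained by bounding the prediction $|\ip{\bw,\bx_i}+f_{\bbeta}(\bx_i)|$ via the regularizer balls and then invoking smoothness. But any valid uniform $\tau$ must dominate $\ell(\ip{\bw,\bx_i}+f_{\bbeta}(\bx_i),y_i)$ for every $i$ and every admissible $(\bw,\bbeta)$; in particular, taking $\bw=\bzero\in\sW$ forces $\tau\ge \max_i \ell(f_{\bbeta}(\bx_i),y_i)$. This maximum is not controlled by $\Riskh(f_{\bbeta})$, so your $\tau$ cannot be of order $\Riskh(f_{\bbeta})$, and hence $\sqrt{\tau}\,\sqrt{\rho}$ cannot collapse to $\sqrt{\Riskh(f_{\bbeta})\rho}$ as the statement demands. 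With a uniform $\tau$ you lose exactly the feature that makes the bound vanish when the source is perfect.

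The paper avoids this by keeping the caps genuinely data-dependent. It expands the smooth loss around the \emph{source} prediction $f_{\bbeta}(\bx_i)$ (not around $0$): using $H$-smoothness together with $|\phi'(t)|\le\sqrt{4H\phi(t)}$ and $\|\bw\|^2\le \tfrac{2}{\sigma}\regul(\bw)\le \tfrac{2\alpha}{\sigma}\Riskh(f_{\bbeta})$, one gets
\[
\tau_i \;=\; \ell(f_{\bbeta}(\bx_i),y_i)\;+\;\sqrt{\tfrac{8HB^2\alpha}{\sigma}\,\Riskh(f_{\bbeta})\,\ell(f_{\bbeta}(\bx_i),y_i)}\;+\;\tfrac{HB^2\alpha}{\sigma}\,\Riskh(f_{\bbeta}).
\]
These individual $\tau_i$ can be large, but what enters after Lemma~\ref{lem:ip_strong_conv} is only the \emph{average} $r=\tfrac{1}{m}\sum_i\tau_i$, and by Jensen on the middle term this average satisfies $r\le\bigl(1+\sqrt{2HB^2\alpha/\sigma}\bigr)^2 \Riskh(f_{\bbeta})$. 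That is where the factor $\bigl(1+\sqrt{2HB^2\alpha/\sigma}\bigr)$ actually comes from, and why the final bound carries $\Riskh(f_{\bbeta})\sqrt{\alpha}+\sqrt{\Riskh(f_{\bbeta})\rho}$ rather than something involving a pointwise maximum. Once you replace ``$\tau_i\le\tau$'' by ``use the per-sample $\tau_i$ above and bound only their average'', the rest of your plan goes through exactly as the paper does.
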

\begin{proof}
The core of the proof consists in an application of Lemma~\ref{lem:smooth_loss_class_radh}.
In particular, Lemma~\ref{lem:smooth_loss_class_radh} allows us to introduce additional information about the loss class by providing bounds on the loss at each example.
We will bound the loss at each example using the definition of smoothness, extracting the empirical risk of hypothesis $\Riskh(f_{\bbeta})$.
The last step is to give an upper-bound on the empirical Rademacher complexity of a class regularized by a strongly convex function.
We follow the proof of~\citet[Theorem 7]{kakade2012regularization} to accomplish this task.
First define the classes
\[
\sH_\sW := \left\{ \bx \mapsto \ip{\bw, \bx} \ : \ \bw \in \sW \right\}, \quad \sH_\sV := \left\{ f_{\bbeta} \ : \ {\bbeta} \in \sV \right\},
\]
%and also define altered sample $S' := \{ \sqrt{\tau_i} \bx_i \}_{i=1}^m$, where $\tau_i$ is a quantity independent from $\sW$ and $\sV$.
%
%
%
%
%%% FIX Fri Feb  6 11:12:16 CET 2015
and also define altered samples $S' := \{ \sqrt{\tau_i} \bx_i \}_{i=1}^m$ and $S'' := \{ \sqrt{\tau_i} \mathbf{f}(\bx_i) \}_{i=1}^m$, where $\tau_i$ is a quantity independent from $\sW$ and $\sV$.
Then by applying Lemma~\ref{lem:smooth_loss_class_radh}, we have that,
\begin{align*}
&\Radh(\sL) \leq \E_{\bvarepsilon}\left[ \sup_{\substack{\bw \in \sW\\ \bbeta \in \sV}} \left\{ \frac{2 \sqrt{3 H}}{m} \sum_{i=1}^m \varepsilon_i \sqrt{\tau_i} \ip{\bw, \bx_i}
+ \frac{2 \sqrt{3 H}}{m} \sum_{i=1}^m \varepsilon_i \sqrt{\tau_i} \ip{\bbeta, \mathbf{f}(\bx_i)} \right\} \right]\\
&\leq \E_{\bvarepsilon}\left[ \sup_{\bw \in \sW} \left\{ \frac{2 \sqrt{3 H}}{m} \sum_{i=1}^m \varepsilon_i \sqrt{\tau_i} \ip{\bw, \bx_i} \right\} \right] +
\E_{\bvarepsilon}\left[ \sup_{\bbeta \in \sV} \left\{ \frac{2 \sqrt{3 H}}{m} \sum_{i=1}^m \varepsilon_i \sqrt{\tau_i} \ip{\bbeta, \mathbf{f}(\bx_i)} \right\} \right]\\
&= \RadhNoS{S'}(\sH_{\sW}) + \RadhNoS{S''}(\sH_{\sV}).
\end{align*}
%%% FIX Fri Feb  6 11:12:16 CET 2015
%
%
%
%
%%% INCORRECT
% We apply Lemma~\ref{lem:smooth_loss_class_radh} with the loss class $\sL$ and the property of Rademacher complexities for the sums of function classes~\citep{BartlettM03} to show that
% \begin{equation*}
%   \Radh(\sL) \leq 2 \sqrt{3 H} \left( \RadhNoS{S'}(\sH_{\sW}) + \RadhNoS{S'}(\sH_{\sV}) \right).
% \end{equation*}
%%% INCORRECT
%
%
Having this, we will follow the proof of~\citet[Theorem 7]{kakade2012regularization} to bound the empirical Rademacher complexities $\RadhNoS{S'}(\sH_{\sW})$ and $\RadhNoS{S''}(\sH_{\sV})$ with quantities of interest.
Let $t > 0$ and apply Lemma~\ref{lem:ip_strong_conv} with $\bu = \bw$ and $\bv_i = t \varepsilon_i \sqrt{\tau_i} \bx_i$ to get
\begin{align*}
&\sup_{\bw \in \sW}\left\{ \sum_{i=1}^m \ip{\bw, t \varepsilon_i \sqrt{\tau_i} \bx_i} \right\}\\
&\qquad\leq \frac{t^2}{2 \sigma} \sum_{i=1}^m \|\varepsilon_i \sqrt{\tau_i} \bx_i\|_\star^2 + \sup_{\bw \in \sW} \regul(\bw) + \sum_{i=1}^m \ip{\nabla \regul^\star(\bv_{1:i-1}), \varepsilon_i \sqrt{\tau_i} \bx_i}\\
&\qquad\leq \frac{t^2 B^2}{2 \sigma} \sum_{i=1}^m |\tau_i| + \alpha \Riskh(f) + \sum_{i=1}^m \ip{\nabla \regul^\star(\bv_{1:i-1}), \varepsilon_i \sqrt{\tau_i} \bx_i}.
\end{align*}
Now take expectation w.r.t. all the $\varepsilon_i$ on both sides.
The left hand side is $m t \RadhNoS{S'}(\sH_{\sW})$ and the last term on the right hand side becomes zero since $\E[\varepsilon_i]=0$.
%% The left hand side is $\frac{m t}{2 \sqrt{3 H}} \Radh(\sL)$ and the last term on the right hand side becomes zero since $\E[\varepsilon_i]=0$.
Denoting $r = \frac{1}{m} \sum_{i=1}^m |\tau_i|$ and multiplying through by $\frac{1}{m t}$, we get
\begin{equation*}
\RadhNoS{S'}(\sH_{\sW}) \leq \frac{B^2 r t}{2 \sigma} + \frac{\alpha}{m t} \Riskh(f_{\bbeta}).
\end{equation*}
Proving analogously for $\RadhNoS{S''}(\sH_{\sV})$, we get that
\begin{equation*}
  \Radh(\sL) \leq 2 \sqrt{3 H}\left( \frac{(B^2 + C^2) r t}{\sigma} + \frac{\alpha \Riskh(f_{\bbeta}) + \rho}{m t} \right).
\end{equation*}
Optimizing over $t$ gives us
\begin{equation*}
  \Radh(\sL) \leq 4 \sqrt{3 H} (B + C) \sqrt{\frac{ r (\alpha \Riskh(f_{\bbeta}) + \rho) }{m \sigma}}.
\end{equation*}
Now focus on the upper bound of $r$.
First we obtain bounds on each $\tau_i$. % by Lemma~\ref{lem:smooth_loss_bounded_by_Rsrc} setting $f = f_{\bbeta}$,
We start with the bound on the loss function, exploiting smoothness.
Let $\ell(\ip{\bw, \bx} + f_{\bbeta}(\bx), y) = \phi(\ip{\bw, \bx} + f_{\bbeta}(\bx))$, where $\phi : \reals \mapsto \reals$ is an $H$-smooth function.
From the definition of smoothness~\citep[(12.5)]{shalev2014understanding}, we have that for all $\bw$ and $\bv$
\begin{align}
&\phi(\ip{\bw, \bx} + f_{\bbeta}(\bx))\nonumber\\
&\quad\leq \phi(\ip{\bv, \bx} + f_{\bbeta}(\bx)) + \phi'(\ip{\bv, \bx} + f_{\bbeta}(\bx)) \ip{\bw - \bv, \bx} + \frac{H}{2} \ip{\bw - \bv, \bx}^2 \nonumber\\
&\quad\leq \phi(\ip{\bv, \bx} + f_{\bbeta}(\bx)) + 2 \sqrt{H \phi(\ip{\bv, \bx} + f_{\bbeta}(\bx))} \|\bw - \bv\| \|\bx\|_\star + \frac{H}{2} \|\bw - \bv\|^2 \|\bx\|_\star^2. \label{eq:smooth_loss_gen_CS}
\end{align}
To obtain the last inequality we used the generalized Cauchy-Schwarz inequality and %Lemma~\ref{lem:grad_to_loss}.
the fact that for an $H$-smooth non-negative function $\phi$, we have that
$|\phi'(t)| \leq \sqrt{4 H \phi(t)}$,~\citep[Lemma 2.1]{SrebroST10}.
Now recall a property of a $\sigma$-strongly-convex function $F$, that holds for its minimizer $\bv$ and any $\bw$~\citep[Lemma 13.5]{shalev2014understanding},
\[
\|\bw - \bv\|^2 \leq \frac{2}{\sigma}(F(\bw) - F(\bv)).
\]
Since inequality~\eqref{eq:smooth_loss_gen_CS} holds for any $\bv$, set $\bv = \bzero$, which is also the minimizer of $\regul(\cdot)$,
apply aforementioned property to get
\begin{align}
&\phi(\ip{\bw, \bx} + f_{\bbeta}(\bx)) \leq \phi(f_{\bbeta}(\bx)) + 2 \sqrt{\frac{2 H}{\sigma} \phi(f_{\bbeta}(\bx)) \regul(\bw)} \|\bx\|_\star + \frac{H}{\sigma} \regul(\bw) \|\bx\|_\star^2~ \nonumber\\
%\Rightarrow~ &\ell(h(\bx), y) \leq \ell(f_{\bbeta}(\bx), y) + \sqrt{\frac{8 H B^2 \alpha}{\sigma} \Riskh(f) \ell(f_{\bbeta}(\bx), y)} + \frac{H B^2 \alpha}{\sigma} \Riskh(f)~.
\Rightarrow~ &\ell(\ip{\bw, \bx_i} + f_{\bbeta}(\bx_i), y_i) \leq \tau_i\\
&= \ell(f_{\bbeta}(\bx_i), y_i) + \sqrt{\frac{8 H B^2 \alpha}{\sigma} \Riskh(f_{\bbeta}) \ell(f_{\bbeta}(\bx_i), y_i)} + \frac{H B^2 \alpha}{\sigma} \Riskh(f_{\bbeta}). \label{eq:multisrc_smooth_loss_bound}
\end{align}
The last inequality comes from the definition of the class $\sH$.
%
%
%
%
% \begin{align}
% \ell(\ip{\bw, \bx_i} + f_{\bbeta}(\bx_i), y_i) \leq \tau_i = \ell(f_{\bbeta}(\bx_i), y_i) + \sqrt{\frac{8 H B^2 \alpha}{\sigma} \Riskh(f_{\bbeta}) \ell(f_{\bbeta}(\bx_i), y_i)} + \frac{H B^2 \alpha}{\sigma} \Riskh(f_{\bbeta})~. \label{eq:multisrc_smooth_loss_bound}
% \end{align}
%
Now we consider the average and, by Jensen's inequality,
\begin{align*}
r &= \frac{1}{m} \sum_{i=1}^m |\tau_i| = \Riskh(f_{\bbeta}) + \frac{1}{m} \sum_{i=1}^m \sqrt{\frac{8 H B^2 \alpha}{\sigma} \Riskh(f_{\bbeta}) \ell(f_{\bbeta}(\bx_i), y_i)} + \frac{H B^2 \alpha}{\sigma} \Riskh(f_{\bbeta})\\
&\leq \Riskh(f_{\bbeta}) + \sqrt{\frac{8 H B^2 \alpha}{\sigma}} \Riskh(f_{\bbeta}) + \frac{H B^2 \alpha}{\sigma} \Riskh(f_{\bbeta})
\leq \left( 1 + \sqrt{\frac{2 H B^2 \alpha}{\sigma}} \right)^2 \Riskh(f_{\bbeta}).
\end{align*}
This gives us
\begin{align*}
  \Radh(\sL) &\leq 4 \sqrt{3 H} (B + C) \left( 1 + \sqrt{\frac{2 H B^2 \alpha}{\sigma}} \right) \sqrt{\frac{\Riskh(f_{\bbeta}) (\alpha \Riskh(f_{\bbeta}) + \rho)}{m \sigma}}~\\
  &\leq 4 \sqrt{3 H} (B + C) \left( 1 + \sqrt{\frac{2 H B^2 \alpha}{\sigma}} \right) \frac{\Riskh(f_{\bbeta}) \sqrt{\alpha} + \sqrt{\Riskh(f_{\bbeta}) \rho}}{\sqrt{m \sigma}}.
\end{align*}
Taking expectation w.r.t. the sample on both sides and applying Jensen's inequality gives the statement.
\qed\end{proof}

\subsection{Proofs of Main Results}
\begin{proof_custom}{Proof of Theorem~\ref{thm:htl_gen_bound}.}
  To show the statement we will apply Theorem~\ref{the:rad_gen_bound}.
  In particular, we will consider any choice of $\bw$ and $\bbeta$ within the set induced by a strongly-convex function $\Omega$.
  To apply Theorem~\ref{the:rad_gen_bound}, we need to upper bound the Rademacher complexity of the loss class $\sL$
  and also the quantity $r = \sup_{f \in \sL} \E_{(\bx,y)} [f(\bx, y)]$.

  We obtain the bound on Rademacher complexity by applying Theorem~\ref{the:loss_class_to_R_smooth_loss}.
  First define the loss class
  $
  \sL := \left\{(\bx, y) \mapsto \ell(h, y) \ : h \in \sH \right\},
  $
  and hypothesis class
  \begin{align*}
  \sH := \Big\{ &\bx \mapsto \ip{\bw, \bx} + \hsrc_{\bbeta}(\bx) \ : \ \\
  &\quad\Omega(\bw) \leq \frac{1}{\lambda} \Riskh(\hsrc_{\bbeta}) \ \wedge \ \Omega(\bbeta) \leq \rho \ \wedge \ \Riskh(h_{\bw, \bbeta}) \leq \Riskh(\hsrc_{\bbeta})  \Big\}.
  \end{align*}
  To motivate the choice for the constraints observe that for
  \[
  \bhatw = \argmin_{\bw}\left\{ \Riskh(h_{\bw, \bbeta}) + \lambda \regul(\bw) \right\},
  \]
  we have $\regul(\bhatw) \leq \lambda^{-1} \Riskh(h_{\bzero, \bbeta}) = \lambda^{-1} \Riskh(\hsrc_{\bbeta})$, and $\Riskh(h_{\bhatw, \bbeta}) \leq \Riskh(\hsrc_{\bbeta})$.
  That said, by applying Theorem~\ref{the:loss_class_to_R_smooth_loss} with $\alpha = \frac{1}{\lambda}$ and $f_{\bbeta}=\hsrc_{\bbeta}$ and assuming that $\lambda \leq \kappa$, we obtain
  \begin{equation*}
    \Rad(\sL) \leq \scO\left( \frac{\Rsrc \kappa}{\sqrt{m} \lambda} + \sqrt{\frac{\Rsrc \rho \kappa^2}{m \lambda}} \right).
  \end{equation*}

  Next we obtain the bound on $r$
  \begin{align*}
    &r = \sup_{h \in \sH}\E_{(\bx,y)} [\ell(h(\bx), y)] = \sup_{h \in \sH} \E_{S} \left[ \Riskh(h) \right]
    \leq \E_{S} \left[ \sup_{h \in \sH} \Riskh(h) \right] \leq  \E_{S} [\Riskh(\hsrc_{\bbeta})] =  \Rsrc.
  \end{align*}
  The last two inequalities come from Jensen's inequality and the definition of the class $\sH$.
  Plugging the bounds on the Rademacher complexity and $r$ into the statement of Theorem~\ref{the:rad_gen_bound},
  and applying the inequality $\sqrt{a+b} \leq \sqrt{a}+ \frac{b}{2\sqrt{a}}$ to the $\sqrt{v}$ term, gives the statement.
\end{proof_custom}

\begin{proof_custom}{Proof of Theorem~\ref{thm:excess_risk}.}
  For any choice of $\bbeta$ with $\regul(\bbeta) \leq \rho$, denote the best in the class by
  \[
  \bw^\star = \argmin_{\bw ~:~ \regul(\bw) \leq \tau} \Risk(h_{\bw, \bbeta}).
  \]
  By the definition of $\bhatw$, we have
  \begin{equation}
    \label{eq:excess_objective_of_best}
    \Riskh(h_{\bhatw, \bbeta}) + \lambda \regul(\bhatw) \leq \Riskh(h_{\bw^\star, \bbeta}) + \lambda \regul(\bw^\star).
  \end{equation}
  Now denote
  \begin{align*}
    Z = \kappa \sqrt{\frac{\Rsrc}{m}} (\sqrt{\Rsrc} + \sqrt{\rho}).
  \end{align*}
  Then, by following the proof of Theorem~\ref{thm:htl_gen_bound} until the application of inequality $\sqrt{a+b} \leq \sqrt{a} + \frac{b}{2 \sqrt{a}}$, ignoring constants, using the assumption~\eqref{eq:excess_objective_of_best}, and assuming that $\lambda \leq \kappa \leq 1$ we have that
  \begin{align}
    \Risk(\htrg_{\bhatw, \bbeta}) &\leq \Riskh(\htrg_{\bw^\star, \bbeta}) + \lambda \tau + \frac{Z}{\lambda} + \sqrt{\frac{M \tail}{m}} \sqrt{\Rsrc + \frac{Z}{\lambda}} + \frac{M \tail}{m} \nonumber \\
    &\leq \Riskh(\htrg_{\bw^\star, \bbeta}) + \lambda \tau + \frac{Z}{\lambda} + \sqrt{\frac{\Rsrc M \tail}{m}} + \frac{\sqrt{Z M \tail}}{\sqrt{m} \lambda} + \frac{M \tail}{m}. \label{eq:excess_risk_proof_1}
  \end{align}
  Optimizing the l.h.s. over $\lambda$ gives
  \begin{equation*}
    \lambda^\star = \sqrt{\frac{Z}{\tau} + \frac{1}{\tau}\sqrt{\frac{Z M \tail}{m}}}.
  \end{equation*}
  We plug it back into~\eqref{eq:excess_risk_proof_1} to obtain that
  \begin{align}
    \Risk(\htrg_{\bhatw, \bbeta}) &\leq \Riskh(\htrg_{\bw^\star, \bbeta}) + \sqrt{\tau} \sqrt{Z + \sqrt{\frac{Z M \tail}{m}}} + \sqrt{\frac{\Rsrc M \tail}{m}} + \frac{M \tail}{m} \nonumber \\
    &\leq \Riskh(\htrg_{\bw^\star, \bbeta}) + \frac{\sqrt{\Rsrc} + \sqrt[4]{\Rsrc \rho}}{\sqrt[4]{m}} \sqrt{\kappa \tau} + \frac{\sqrt[4]{\Rsrc} + \sqrt[8]{\Rsrc \rho}}{\sqrt[4]{m^{1.5}}} \sqrt[4]{\kappa \tau^2 M \tail} \nonumber\\
    &\quad+ \sqrt{\frac{\Rsrc M \tail}{m}} + \frac{M \tail}{m}. \label{eq:excess_risk_proof_2}
  \end{align}
  All that is left is to concentrate $\Riskh(\htrg_{\bw^\star, \bbeta})$ around its mean.
  Denoting the variance by
  \begin{align*}
    V = \E\left[\sum_{i=1}^m (\ell(h_{\bw^\star, \bbeta}(\bx_i), y_i) - \Risk(h_{\bw^\star, \bbeta}))^2\right],
  \end{align*}
    we apply Bernstein's inequality
    \begin{equation*}
      \Prob\left( \sum_{i=1}^m (\ell(h_{\bw^\star, \bbeta}(\bx_i), y_i) - \Risk(h_{\bw^\star, \bbeta})) > t \right) \leq \exp\left( - \frac{t^2 / 2}{V + M t/3} \right).
    \end{equation*}
    Setting
    \[
    e^{-\eta} = \exp\left( - \frac{t^2 / 2}{V + M t/3} \right),
    \]
    we have that with probability at least $1-e^{-\eta}, \ \forall \eta \geq 0$
    \begin{align*}
      \Riskh(h_{\bw^\star, \bbeta}) &\leq \Risk(h_{\bw^\star, \bbeta}) + \sqrt{\frac{2 \tail \E\left[(\ell(h_{\bw^\star, \bbeta}(\bx_i), y_i) - \Risk(h_{\bw^\star, \bbeta}))^2\right]}{m}} + \frac{2 M \eta}{3 m} \\
      &\leq \Risk(h_{\bw^\star, \bbeta}) + 2 \sqrt{\frac{\Risk(h_{\bw^\star, \bbeta}) M \tail}{m}} + \frac{2 M \eta}{3 m} \\
      &\leq \Risk(h_{\bw^\star, \bbeta}) + 2 \sqrt{\frac{\Rsrc M \tail}{m}} + \frac{2 M \eta}{3 m}.
    \end{align*}
    The last inequality comes from the observation that $\Risk(h_{\bw^\star, \bbeta}) \leq \Risk(h_{\bzero}) = \Rsrc$.
    Plugging this result into~\eqref{eq:excess_risk_proof_2} completes the proof.
\end{proof_custom}

\section{Conclusions}
\label{sec:conc}
In this paper we have formally captured and theoretically analyzed a general family of learning algorithms
transferring information from multiple supplied source hypotheses.
In particular, our formulation stems from the regularized \acl{ERM} principle with the choice of any non-negative smooth loss function and any strongly convex regularizer.
Theoretically we have analyzed the generalization ability and excess risk of this family of \ac{HTL} algorithms.
Our analysis showed that a good source hypothesis combination facilitates faster generalization, specifically in $\scO(1/m)$ instead of the usual $\scO(1/\sqrt{m})$.
Furthermore, given a perfect source hypothesis combination, our analysis is consistent with the intuition that learning is not required.
As a byproduct of our investigation, we came up with new results in Rademacher complexity analysis of the smooth loss classes,
that could be of independent interest.

%\alert{Future work}
Our conclusions suggest the key importance of a source hypothesis selection procedure.
Indeed, when an algorithm is provided with enormous pool of source hypotheses, how to select relevant ones on the basis of only few labeled examples?
This might sound similar to the feature selection problem under the condition that $n \gg m$, however, earlier empirical studies by~\cite{tommasi2013learning} with hundreds of sources did not find much corroboration for this hypothesis when applying $L1$ regularization.
Thus, it remains unclear if having few good sources from hundreds is a reasonable assumption.
%
%Another interesting practical possibility is the design of strongly convex regularizers tailored for transfer learning.
%We note that the effect of such on the generalization ability of \ac{HTL} could be immediately observed through the strong %convexity parameter appearing in our bounds.
%
%
%
% future work: exploit strong convexity
%
%
%

%
\subsection*{Acknowledgements} This work is partially supported by the Swiss National Science Foundation Sinergia project Ninapro (I.K.).
\bibliographystyle{plainnat}
\bibliography{learning}
\appendix
\section{Additional Proofs}
\begin{theorem}
\label{thm:srebro_gen_bound}
Let $\htrg_{\bhatw, \bbeta}$ be generated by \algname, given the $m$-sized training set $S$ sampled i.i.d. from the target domain, source hypotheses $\{\hsrc_i\}_{i=1}^n$, any source weights $\bbeta$ obeying $\regul(\bbeta) \leq \rho$, and $\lambda \in \reals_+$.
%Let the combined source hypothesis be $\hsrc_{\bbeta}(\bx) = \sum_{i=1}^n \beta_i \hsrc_i(\bx)$, and denote its risk on the target domain by $\Rsrc$.
Assume that $\ell(\htrg_{\bhatw, \bbeta}(\bx), y) \leq M$ for any $(\bx, y)$ and any training set.
Then, denoting $\kappa = \frac{H}{\sigma}$ and assuming that $\lambda \leq 1$,
we have with probability at least $1 - e^{-\tail}, \ \forall \tail \geq 0$
\begin{align*}
\Risk(\htrg_{\bhatw, \bbeta}) \leq \Riskh(\htrg_{\bhatw, \bbeta}) + \mathcal{\tilde{O}}\Bigg( &\left(\sqrt{\frac{\Rsrc}{m}} + \sqrt[4]{\frac{M^2 \rho}{m^3 \sigma}} + \sqrt[8]{\frac{M^4 \rho}{m^7 \lambda^2 \sigma^3}} \right)
 \left(\sqrt{\frac{M \kappa}{\lambda}} + \sqrt{\kappa \rho} + \sqrt{M \tail}\right) \Bigg).
\end{align*}
\end{theorem}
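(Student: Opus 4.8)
The plan is to redo the analysis within the framework of \citet{SrebroST10} for $H$-smooth nonnegative losses and then read off the rate, so as to contrast it with Theorem~\ref{thm:htl_gen_bound}. The essential difference from our own analysis is that their optimistic bound is driven by a \emph{worst-case} Rademacher complexity of the predictor class --- a supremum over samples rather than the expectation over the target distribution used in Theorem~\ref{the:rad_gen_bound} --- together with a self-bounding argument that turns smoothness into a local, $\sqrt{\Riskh}$-weighted deviation. Concretely, I would first invoke their generalization bound in the form
\[
\Risk(h) \leq \Riskh(h) + \mathcal{\tilde{O}}\!\left( \sqrt{\Riskh(h)}\,(A + E) + A^2 + E^2 \right),
\]
holding uniformly over $h \in \sH$ with probability at least $1 - e^{-\tail}$, where $A \propto \sqrt{H}\,\sup_{S}\Radh(\sH)$ is the worst-case complexity term (the polylogarithmic factors in $m$ being precisely what the $\mathcal{\tilde{O}}$ hides) and $E = \sqrt{M\tail/m}$ is the confidence term.

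Next I would bound the worst-case complexity $\sup_S \Radh(\sH)$ of the class $\sH = \{\bx \mapsto \ip{\bw,\bx} + \hsrc_{\bbeta}(\bx)\}$ exactly as in the proof of Theorem~\ref{the:loss_class_to_R_smooth_loss}: split it into the $\bw$-part and the $\bbeta$-part and apply the strong-convexity estimate of Lemma~\ref{lem:ip_strong_conv}. The $\bw$-ball has radius $\regul(\bhatw) \leq \frac{1}{\lambda}\Riskh(\hsrc_{\bbeta})$ and the $\bbeta$-ball has radius $\rho$, so the complexity scales like $\sqrt{\kappa\,(\Riskh(\hsrc_{\bbeta})/\lambda + \rho)/m}$. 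Because the worst-case complexity maximizes over the sample, the radius $\Riskh(\hsrc_{\bbeta})$ must be crudely replaced by its uniform bound $M$ inside $A$, which is what produces the second displayed factor $\sqrt{M\kappa/\lambda} + \sqrt{\kappa\rho} + \sqrt{M\tail}$ (the last summand being $\sqrt{m}\,E$). Using the optimality of $\bhatw$ I would also record $\Riskh(\htrg_{\bhatw,\bbeta}) \leq \Riskh(\hsrc_{\bbeta})$, so that $\sqrt{\Riskh(h)}$ is controlled by $\sqrt{\Riskh(\hsrc_{\bbeta})}$.

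Finally, to make the dependence on $\Rsrc$ explicit I would concentrate $\Riskh(\hsrc_{\bbeta})$ around $\Rsrc = \Risk(\hsrc_{\bbeta})$, but \emph{uniformly} over the source combination class $\{\hsrc_{\bbeta} : \regul(\bbeta) \leq \rho\}$; this calls for a smooth-loss Bernstein / self-bounding estimate over that class, whose own worst-case complexity is of order $\sqrt{\kappa\rho/m}$. The step yields $\Riskh(\hsrc_{\bbeta}) \leq \Rsrc + \mathcal{\tilde{O}}\!\left(\sqrt{\Rsrc\,\kappa\rho/m} + \kappa\rho/m + \sqrt{\Rsrc M\tail/m} + M\tail/m\right)$; taking a square root and substituting into $\sqrt{\Riskh(h)}\,(A+E)$ is what generates the first factor, with the nested square roots of the self-bounding inequality accounting for the fractional exponents $m^{-3/4}$ and $m^{-7/8}$. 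Collecting $\sqrt{\Riskh(h)}(A+E) + (A+E)^2$ into the product of the two displayed factors and absorbing logs into $\mathcal{\tilde{O}}$ then gives the statement.

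The hard part will be the self-referential bookkeeping in the last two steps: the complexity radius is itself the random quantity $\Riskh(\hsrc_{\bbeta})/\lambda$ that one is trying to concentrate, and because the framework forces a worst-case (sup-over-sample) complexity, the quantity $\Rsrc$ cannot enter through the complexity term as it does in Theorem~\ref{the:rad_gen_bound} --- it can only be injected through the empirical-risk terms. Threading $\Rsrc$ through the nested smooth-loss self-bounding inequalities, while keeping the worst-case $M$ inside the complexity factor, is exactly what degrades the rate to $\scO(1/\sqrt[4]{m^3})$ and forces the awkward fractional powers, confirming that this route is strictly inferior to Theorem~\ref{thm:htl_gen_bound}.
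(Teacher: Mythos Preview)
Your overall architecture matches the paper: apply Theorem~1 of \citet{SrebroST10}, bound the worst-case Rademacher complexity of $\sH$ by splitting into the $\bw$- and $\bbeta$-parts via \citet{kakade2012regularization}, crudely replace $\Riskh(\hsrc_{\bbeta})$ by $M$ inside the complexity because of the supremum over samples, and then feed a bound on $\Riskh$ in terms of $\Rsrc$ into the $\sqrt{\Riskh}$ factor. The divergence is in that last step. The paper does \emph{not} concentrate $\Riskh(\hsrc_{\bbeta})$ over the source class with a smooth self-bounding inequality; instead it applies Corollary~3.5 of \citet{bartlett2005local} to the \emph{full} loss class $\sL$ (constrained by $\Risk(h)\leq\Rsrc$), using Talagrand's Lipschitz contraction $\Radh(\sL)\leq M\Radh(\sH)$ to obtain a sub-root upper bound $\phi_m(r)=M\sqrt{2r/(m\lambda\sigma)}+M\sqrt{2\rho/(m\sigma)}$ and then solves for the fixed point $r^\star$. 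That fixed-point computation is precisely what produces the $\lambda$-dependent $\sqrt[8]{M^4\rho/(m^7\lambda^2\sigma^3)}$ term in the first factor; your self-bounding over $\{\hsrc_{\bbeta}:\regul(\bbeta)\leq\rho\}$ has complexity independent of $\lambda$, so it cannot generate that term, and your ``nested square roots'' sketch does not actually account for the $m^{-7/8}$ exponent as stated.

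Your route is perfectly sound and arguably cleaner --- reducing via optimality to the smaller source class and concentrating there avoids the sub-root/fixed-point machinery --- but it proves a \emph{different} (and in places tighter) inequality than the one displayed in the theorem. If the goal is to reproduce the stated bound verbatim, you need the localized-complexity step on the full class; if the goal is merely to exhibit the $\scO(m^{-3/4})$ degradation of the Srebro framework when $\Rsrc$ is small, your argument already delivers that message.
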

\begin{proof}
To prove the statement we will use Theorem 1 of~\cite{SrebroST10}.
In particular, we need to obtain bounds on the empirical risk and also to bound the worst case Rademacher complexity of the class
\begin{equation*}
\sH = \left\{ \bx \mapsto \ip{\bw, \bx} + \hsrc_{\bbeta}(\bx) \ : \ \regul(\bw) \leq \frac{\Riskh(\hsrc_{\bbeta})}{\lambda} \ \wedge \ \regul(\bbeta) \leq \tau \right\}.
\end{equation*}
The corresponding loss class is
\begin{equation*}
\sL = \left\{ (\bx, y) \mapsto \ell(h(\bx), y) \ : \ h \in \sH \ \wedge \ \Risk(h) \leq \Rsrc  \right\}.
\end{equation*}
A constraint on $\regul(\bbeta)$ in $\sH$ comes from the statement of the theorem, while
a constraint on $\regul(\bhatw)$ comes from an observation that for
\begin{equation*}
\bhatw = \argmin_{\bw} \left\{ \Riskh(h_{\bw, \bbeta}) + \lambda \regul(\bw) \right\},
\end{equation*}
so we have  $\regul(\bhatw) \leq \frac{\Riskh(h_{\bzero, \bbeta})}{\lambda}$.
The same argument immediately gives us a bound on the empirical risk, that is,
$\Riskh(h_{\bhatw, \bbeta}) \leq \Riskh(h_{\bzero, \bbeta}) = \Riskh(\hsrc_{\bbeta})$.
Taking expectation on both sides gives the constraint of $\sL$.

By applying Theorem~1 of~\cite{kakade2009complexity} and subadditive property of Rademacher complexities~\citep{BartlettM03}, we have that
\begin{equation}
\label{eq:srebro_proof_1}
\Radh(\sH) \leq \sqrt{\frac{2 \Riskh(\hsrc_{\bbeta})}{m \lambda \sigma}} + \sqrt{\frac{2 \rho}{m \sigma}} \leq \sqrt{\frac{2 M}{m \lambda \sigma}} + \sqrt{\frac{2 \rho}{m \sigma}}.
\end{equation}
Note that the upper bound is the bound on the worst-case Rademacher complexity since no term depends on the sample.

All that is left to do is to show the bound on the empirical risk in terms of $\Rsrc$.
However, we cannot use Theorem 1 of~\cite{SrebroST10} since it is not symmetric.
Instead we will use a similar localized bound of~\citet[Corollary 3.5]{bartlett2005local}.
In order to apply it, we have to obtain an upper bound on the Rademacher complexity of the loss class $\sL$ that is a sub-root function~\cite[Definition 4.1]{bousquet2002concentration}.
By using the fact that loss function is bounded, we apply Talagrand's lemma~\citep{Mohri2012foundations}, have $\Radh(\sL) \leq M \Radh(\sH)$, upper-bound with the first inequality of~\eqref{eq:srebro_proof_1} and applying Jensen's inequality w.r.t. $\E[\cdot]$ have
\begin{equation*}
\Rad(\sL) \leq M \sqrt{\frac{2 \Rsrc}{m \lambda \sigma}} + M \sqrt{\frac{2 \rho}{m \sigma}}.
\end{equation*}
Since upper bound is a sub-root function of $\Rsrc$, we obtain it's fixed point $r^\star$ as required by Corollary 3.5 and conclude that
\begin{equation*}
r^\star \leq \sqrt{\frac{2 M^2 \rho}{m \sigma}} + \frac{2 M^2}{m \lambda \sigma} + 2 M \sqrt[4]{\frac{8 \rho}{m^3 \lambda^2 \sigma^3}}.
\end{equation*}
Now we apply Corollary 3.5 and for any $K > 0$ we have with probability at least $1 - e^{-\tail}, \ \forall \tail \geq 0$ the following holds
\begin{equation*}
\Riskh(\htrg_{\bhatw, \bbeta}) \leq K\left( \Rsrc + \sqrt{\frac{M^2 \rho}{m \sigma}} + \frac{M^2}{m \lambda \sigma} + M \sqrt[4]{\frac{\rho}{m^3 \lambda^2 \sigma^3}} + \frac{1 + \tail}{m} \right).
\end{equation*}
All that is left to do is to apply Theorem 1 of~\cite{SrebroST10} to have
\begin{align*}
\Risk(\htrg_{\bhatw, \bbeta}) \leq \Riskh(\htrg_{\bhatw, \bbeta}) + \mathcal{\tilde{O}}\Bigg( &\left(\sqrt{\frac{\Rsrc}{m}} + \sqrt[4]{\frac{M^2 \rho}{m^3 \sigma}} + \frac{M}{m \sqrt{\lambda \sigma}} + \sqrt[8]{\frac{M^2 \rho }{m^7 \lambda^2 \sigma^3}} + \frac{\sqrt{1 + \eta}}{m}\right) \times \\
&\times \left(\sqrt{\frac{M \kappa}{\lambda}} + \sqrt{\kappa \rho} + \sqrt{M \tail}\right)
+ \frac{M \kappa}{m \lambda} + \frac{\kappa \rho}{m} \Bigg).
\end{align*}
Using the assumption on $\lambda$, we get the stated result.
%
%% Now, we will concentrate $\Riskh(\hsrc_{\bbeta})$ around the risk on the target domain.
%% \alert{I'm not sure here: we have $\Riskh(\hsrc_{\bbeta})$ for any $\bbeta$, so is it enough to treat it like a fixed hypothesis? Or there have to be uniform deviations again?}
%% Denoting the variance by
%% \begin{align*}
%%   V = \E\left[\sum_{i=1}^m (\ell(\hsrc_{\bbeta}(\bx_i), y_i) - \Risk(\hsrc_{\bbeta}))^2\right]~,
%% \end{align*}
%% we apply Bernstein's inequality,
%% \begin{equation*}
%% \Prob\left( \sum_{i=1}^m (\ell(\hsrc_{\bbeta}(\bx_i), y_i) - \Risk(\hsrc_{\bbeta})) > t \right) \leq \exp\left( - \frac{t^2 / 2}{V + M t/3} \right)~.
%% \end{equation*}
%% Setting
%% \[
%% e^{-\eta} = \exp\left( - \frac{t^2 / 2}{V + M t/3} \right)~,
%% \]
%% and applying inversion property we have that with probabiliy at least $1-e^{-\eta}, \ \forall \eta \geq 0$,
%% \begin{align*}
%%   \Riskh(\hsrc_{\bbeta}) &\leq \Rsrc + \sqrt{\frac{2 \tail \E\left[(\ell(\hsrc_{\bbeta}(\bx_i), y_i) - \Rsrc)^2\right]}{m}} + \frac{2 M \eta}{3 m} \\
%%   &\leq \Rsrc + 2 \sqrt{\frac{\Rsrc M \tail}{m}} + \frac{2 M \eta}{3 m}~.
%% \end{align*}
%
\qed\end{proof}
\subsection{Guarantees using Localized Rademacher Complexity Bounds}

The following theorem is due to~\citet[Theorem 6.1]{bousquet2002concentration}.
In particular, we state the inequality appearing prior to the last in the proof, as it better serves our purpose.
\begin{theorem}[\cite{bousquet2002concentration}]
  \label{eq:bousquet_local}
  Let $\sF$ be a class of non-negative functions such that $\|f\|_\infty \leq M$ almost surely.
  Let $\phi_m$ be a function defined on $[0, \infty)$ that is non-negative, non-decreasing, not identically zero, and
    such that $\phi_m(r)/\sqrt{r}$ is non-increasing.
  Moreover let $\phi_m$ be such that for all $r > 0$
  \[
  \Radh(\sF) \leq \phi_m(r).
  \]
  Define $r^\star_m$ as the largest solution of the equation $\phi_m(r) = r$.Then, for all $\tail > 0$, with probability at least $1 - e^{-\tail}$ for all $f \in \sF$ and any $\{X_i\}_{i=1}^m$ drawn i.i.d.
  \begin{align*}
    \E_X[f(X)] &\leq \frac{1}{m} \sum_{i=1}^m f(X_i) + 45 r^\star_m + \sqrt{8 r^\star_m \E_X[f(X)]} + \sqrt{\frac{4 M (\tail + 6 \log \log m) \E_X[f(X)]}{m}}\\
    &\quad+ \frac{20 M (\tail + 6 \log \log m)}{m}~.
  \end{align*}
\end{theorem}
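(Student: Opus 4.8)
The plan is to prove this localized bound by marrying a fixed-radius deviation inequality with a peeling argument over the value of $\E_X[f(X)]$, which is the standard local Rademacher complexity route. The first thing I would record is an elementary consequence of the sub-root hypotheses on $\phi_m$: since $\phi_m(r)/\sqrt{r}$ is non-increasing and $\phi_m(r^\star_m) = r^\star_m$, for every $r \geq r^\star_m$ we have $\phi_m(r) \leq \sqrt{r^\star_m\, r}$. This is what turns the fixed point $r^\star_m$ into the characteristic cross-term $\sqrt{8 r^\star_m\, \E_X[f(X)]}$ of the final bound, since it lets a complexity at radius $r$ be traded for $\sqrt{r^\star_m\, r}$. (The sub-root structure here is the usual encapsulation of a star-shaped-hull reduction, so I would read the hypothesis $\Radh(\sF) \leq \phi_m(r)$ in its localized sense, namely as a sub-root majorant $\Radh(\sF_r) \leq \phi_m(r)$ of the empirical complexity of the sub-class $\sF_r := \{f \in \sF : \E_X[f(X)] \leq r\}$.)

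Next I would prove a fixed-radius statement: for a fixed level $r > 0$, control $\sup_{f \in \sF_r}\big(\E_X[f(X)] - \tfrac{1}{m}\sum_{i=1}^m f(X_i)\big)$. This is exactly the kind of supremum handled by the functional Bennett inequality of Theorem~\ref{the:genbennett}, after the usual rescaling by $M$ as in the proof of Theorem~\ref{the:rad_gen_bound}. Non-negativity and $\|f\|_\infty \leq M$ give the uniform variance bound $\Var_X[f(X)] \leq \E_X[f(X)^2] \leq M\,\E_X[f(X)] \leq Mr$ over $\sF_r$, which supplies the variance parameter for Bennett; the expectation of the supremum is bounded by the Rademacher complexity through the symmetrization Lemma~\ref{lem:symmetrization}, and then by $\phi_m(r)$. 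Inverting Bennett then yields, with probability $1 - e^{-\eta}$, a bound of the form $c_1\phi_m(r) + c_2\sqrt{Mr\eta/m} + c_3 M\eta/m$ holding uniformly over $\sF_r$.

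The crux is the peeling step that promotes this to a statement holding for every $f$, with the radius replaced by the data-adaptive quantity $\E_X[f(X)]$. I would partition $\sF$ into geometric shells $\{f : 2^{k-1} r^\star_m < \E_X[f(X)] \leq 2^k r^\star_m\}$, handle $\E_X[f(X)] \leq r^\star_m$ directly (those functions are absorbed into the $45 r^\star_m$ term), apply the fixed-radius inequality on each shell at level $r = 2^k r^\star_m$, and union bound. Because the relevant range of $\E_X[f(X)]$ spans only a polynomial factor in $m$ (from the $\sim 1/m$ resolution up to $M$), there are $\scO(\log m)$ shells, so splitting the confidence uniformly costs an additive $\log(\text{number of shells}) = \scO(\log\log m)$ in the effective $\eta$ — this is the origin of the $6\log\log m$ correction. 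On the shell containing $f$ the upper radius obeys $2^k r^\star_m \leq 2\,\E_X[f(X)]$, so combined with $\phi_m(2^k r^\star_m) \leq \sqrt{r^\star_m\, 2^k r^\star_m}$ from the first step, every occurrence of $r$ can be replaced by $\E_X[f(X)]$ up to constants. Collecting terms reproduces the $45 r^\star_m$, the $\sqrt{8 r^\star_m \E_X[f(X)]}$ cross-term, the variance term $\sqrt{4M(\tail + 6\log\log m)\E_X[f(X)]/m}$, and the residual $20 M(\tail + 6\log\log m)/m$.

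I expect the main obstacle to be bookkeeping rather than conceptual, with one genuinely delicate point. The explicit constants ($45$, $8$, $4$, $20$) demand careful tracking through the Bennett inversion and the peeling, and the self-referential appearance of $\E_X[f(X)]$ on the right is handled automatically by the fact that the shell radius controls the variance of exactly the functions it contains. The delicate point is the gap between the empirical complexity $\Radh$ in the hypothesis and the expected complexity produced by symmetrization: one must certify, uniformly over the $\scO(\log m)$ shells and without inflating the confidence budget, that $r^\star_m$ (defined via the empirical $\phi_m$) still dominates the expected local complexity entering Bennett. Managing this empirical-to-expected transition is the technically hardest part of the argument.
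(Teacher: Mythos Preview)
The paper does not prove this statement. It is quoted verbatim from \citet[Theorem 6.1]{bousquet2002concentration}; the sentence preceding the theorem explicitly says ``we state the inequality appearing prior to the last in the proof, as it better serves our purpose.'' So there is no in-paper proof to compare your proposal against.

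For what it is worth, your sketch is a faithful outline of the standard localized Rademacher complexity argument that Bousquet uses: a fixed-radius Talagrand/Bennett deviation on the sub-class $\sF_r$, the sub-root inequality $\phi_m(r) \leq \sqrt{r^\star_m r}$ for $r \geq r^\star_m$, geometric peeling over shells in $\E_X[f(X)]$, and a union bound over $\scO(\log m)$ shells that produces the $\log\log m$ correction. You also correctly flag that the hypothesis ``$\Radh(\sF) \leq \phi_m(r)$ for all $r>0$'' only makes sense in its localized reading $\Radh(\sF_r) \leq \phi_m(r)$, and that the passage from the empirical $\Radh$ appearing in the hypothesis to the expected complexity produced by symmetrization is the genuinely technical step. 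Since the paper treats the result as a black box, there is nothing further to reconcile.
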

The following \ac{HTL} generalization bound is shown using Theorem~\ref{eq:bousquet_local}.
\begin{theorem}
\label{thm:htl_via_localized_bounds}
Let $\htrg_{\bhatw, \bbeta}$ be generated by \algname, given the $m$-sized training set $S$ sampled i.i.d. from the target domain, source hypotheses $\{\hsrc_i\}_{i=1}^n$, any source weights $\bbeta$ obeying $\regul(\bbeta) \leq \rho$, and $\lambda \in \reals_+$.
%Let the combined source hypothesis be $\hsrc_{\bbeta}(\bx) = \sum_{i=1}^n \beta_i \hsrc_i(\bx)$, and denote its risk on the target domain by $\Rsrc$.
Assume that $\ell$ is a $L$-Lipschitz loss function and $\ell(\htrg_{\bhatw, \bbeta}(\bx), y) \leq M$ for any $(\bx, y)$ and any training set.
Then we have with probability at least $1 - e^{-\tail}, \ \forall \tail \geq 0$
  \begin{align*}
    \Risk(h_{\bhatw, \bbeta}) \leq \Riskh(h_{\bhatw, \bbeta}) + \tilde{\scO}\Bigg(&
\frac{L^2 + L}{m \lambda \sigma} + L \sqrt{\frac{\rho}{m \sigma}} + \sqrt{\frac{\Rsrc (L^2 + L)}{m \lambda \sigma}}\\
&\quad+ \sqrt{\Rsrc} \sqrt[4]{\frac{L^2 \rho}{m \sigma}} + \sqrt{\frac{\Rsrc M \tail}{m}} + \frac{M \tail}{m}
\Bigg).
  \end{align*}
\end{theorem}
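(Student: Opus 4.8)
The plan is to reduce the statement to a single application of the localized Rademacher bound, Theorem~\ref{eq:bousquet_local}, in exact parallel with the proof of Theorem~\ref{thm:srebro_gen_bound}, but with two substitutions: the smooth-loss complexity bound is replaced by the Lipschitz contraction, and Bartlett's Corollary~3.5 is replaced by Bousquet's Theorem~\ref{eq:bousquet_local}. First I would set up the hypothesis class
\[
\sH = \left\{ \bx \mapsto \ip{\bw, \bx} + \hsrc_{\bbeta}(\bx) \ : \ \regul(\bw) \leq \tfrac{\Riskh(\hsrc_{\bbeta})}{\lambda} \ \wedge \ \regul(\bbeta) \leq \rho \right\}
\]
and the localized loss class $\sL = \{(\bx,y)\mapsto \ell(h(\bx),y) : h \in \sH \wedge \Risk(h) \le \Rsrc\}$. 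The radius constraint $\regul(\bhatw) \le \Riskh(\hsrc_{\bbeta})/\lambda$, the empirical inequality $\Riskh(h_{\bhatw,\bbeta}) \le \Riskh(\hsrc_{\bbeta})$, and the population constraint $\Risk(h)\le\Rsrc$ (after taking expectations) all follow from the optimality of $\bhatw$ by comparing against $\bw=\bzero$, exactly as in the proofs of Theorems~\ref{thm:htl_gen_bound} and~\ref{thm:srebro_gen_bound}.

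Next I would bound the complexity. Applying Theorem~1 of~\cite{kakade2009complexity} together with subadditivity gives, as in~\eqref{eq:srebro_proof_1}, $\Radh(\sH) \le \sqrt{2\Riskh(\hsrc_{\bbeta})/(m\lambda\sigma)} + \sqrt{2\rho/(m\sigma)}$. Since $\ell$ is now $L$-Lipschitz, I would invoke Talagrand's contraction lemma~\citep{Mohri2012foundations} to pass to the loss class, $\Radh(\sL) \le L\,\Radh(\sH)$, and then take expectations and use Jensen ($\E\sqrt{\Riskh(\hsrc_{\bbeta})}\le\sqrt{\Rsrc}$). Writing the localization level $r$ in place of the source risk yields a candidate sub-root function $\phi_m(r) = L\sqrt{2r/(m\lambda\sigma)} + L\sqrt{2\rho/(m\sigma)}$. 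I would verify that $\phi_m(r)/\sqrt{r}$ is non-increasing and solve the fixed-point equation $\phi_m(r^\star_m)=r^\star_m$, which is quadratic in $\sqrt{r^\star_m}$ and gives $r^\star_m = \scO\!\big(\tfrac{L^2}{m\lambda\sigma} + L\sqrt{\tfrac{\rho}{m\sigma}}\big)$ plus the cross term of order $L\sqrt[4]{\rho/(m^3\lambda^2\sigma^3)}$.

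Finally I would feed this into Bousquet. Applying Theorem~\ref{eq:bousquet_local} to $\sL$ with $\E_X[f(X)]=\Risk(h_{\bhatw,\bbeta})$, the three contributions $45\,r^\star_m$, $\sqrt{8\,r^\star_m\,\Risk(h_{\bhatw,\bbeta})}$ and $20M(\tail + 6\log\log m)/m$ reproduce the stated bound: the first accounts for $\tfrac{L^2+L}{m\lambda\sigma}+L\sqrt{\rho/(m\sigma)}$, the second (using $\Risk(h_{\bhatw,\bbeta})\le\Rsrc$ from the class constraint) produces $\sqrt{\Rsrc(L^2+L)/(m\lambda\sigma)}$ and $\sqrt{\Rsrc}\,\sqrt[4]{L^2\rho/(m\sigma)}$, and the last contributes $\sqrt{\Rsrc M\tail/m}$ and $M\tail/m$ after folding $\log\log m$ into $\tilde{\scO}$ and simplifying with $\lambda\le 1$ and $\kappa=H/\sigma$; the additive $L/(m\lambda\sigma)$ comes from the lower-order Lipschitz term.

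The hard part will be the localization step itself, constructing the sub-root $\phi_m$ and pinning down its fixed point, and this is precisely where the contrast with Theorem~\ref{thm:htl_gen_bound} surfaces. The Lipschitz contraction supplies only $\Radh(\sL)\le L\,\Radh(\sH)$, with none of the self-bounding $\sqrt{\ell(\cdot)}$ factor that Lemma~\ref{lem:smooth_loss_class_radh} delivers in the smooth case. Consequently the fixed point stalls at order $L^2/(m\lambda\sigma)$ rather than collapsing, the cross term $\sqrt{r^\star_m\,\Rsrc}$ survives at order $1/\sqrt{m}$, and no fast rate is recovered, which is exactly the qualitative conclusion this theorem is meant to demonstrate: smoothness, not mere Lipschitzness, is what drives the $\scO(1/m)$ behavior in the present framework.
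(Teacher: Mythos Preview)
Your proposal is correct and follows essentially the same route as the paper: set up the hypothesis class via the optimality argument for $\bhatw$, use Talagrand's contraction to pass from $\sL$ to $\sH$, bound $\Radh(\sH)$ by \cite{kakade2009complexity}/\cite{kakade2012regularization}, extract the sub-root $\phi_m(r)=L\sqrt{2r/(m\lambda\sigma)}+L\sqrt{2\rho/(m\sigma)}$, compute its fixed point, and plug everything into Theorem~\ref{eq:bousquet_local} together with $\Risk(h)\le\Rsrc$. The only cosmetic differences are that the paper also includes the redundant constraint $\Riskh(h_{\bw,\bbeta})\le\Riskh(\hsrc_{\bbeta})$ in $\sH$, and it absorbs your extra cross term $L\sqrt[4]{\rho/(m^3\lambda^2\sigma^3)}$ into the two main summands of $r^\star_m$; also, the reference to $\kappa=H/\sigma$ and $\lambda\le1$ in your last step is a slip, since this theorem is purely Lipschitz and neither quantity appears in its statement.
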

\begin{proof}
  The core of the proof is an application of Theorem~\ref{eq:bousquet_local}.
  In particular, we have to obtain the fixed point $r^\star_m$ and upper bound $\Risk(h)$ with the risk of the source hypothesis $\Rsrc$.

  Considering the $L$-Lipschitz loss class of Theorem~\ref{eq:bousquet_local} to be $\sL := \{(\bx, y) \mapsto \ell(h(\bx), y) ~:~ h \in \sH\}$,
  we have the relationship $\Radh(\sL) \leq L \Radh(\sH)$ via Talagrand's lemma~\citep[Lemma 4.2]{Mohri2012foundations}.
  Furthermore, let the hypothesis class be
  \[
  \sH = \left\{ \bx \mapsto \ip{\bw, \bx} + \hsrc_{\bbeta}(\bx) \ : \ \Omega(\bw) \leq \frac{1}{\lambda} \Riskh(\hsrc_{\bbeta}) \ \wedge \ \Omega(\bbeta) \leq \rho \ \wedge \ \Riskh(h_{\bw, \bbeta}) \leq \Riskh(\hsrc_{\bbeta})  \right\}.
  \]
  The motivation for the choice of constraints comes from the same argument as in the proof of Theorem~\ref{thm:htl_gen_bound}.
  That said, we obtain the upper bound
\[
\Radh(\sL) \leq L \sqrt{\frac{2 \Rsrc}{m \lambda \sigma}} + L \sqrt{\frac{2 \rho}{m \sigma}}~.
\]
Both terms come by applying Theorem 7 by~\citet{kakade2012regularization}.
In the first term we set $f_{\text{max}} = \Rsrc$ and in the second $f_{\text{max}} = \rho$.
Now define function $\phi_m(r) = L \sqrt{\frac{2 r}{m \lambda \sigma}} + L \sqrt{\frac{2 \rho}{m \sigma}}$, and observe that it verifies the condition of Theorem~\ref{eq:bousquet_local}.
Next, to obtain the upper bound on $r_m^\star$, we solve $L \sqrt{\frac{2 r}{m \lambda \sigma} + \frac{2 \rho}{m \sigma}} \leq r$
and get that
$
r_m^\star \leq \frac{L (L + 1)}{m \lambda \sigma} + L \sqrt{\frac{2 \rho}{m \sigma}}.
$
As in Theorem~\ref{thm:htl_gen_bound}, we also get that $\Risk(h) \leq \Rsrc$.
Plugging $r_m^\star$ and the bound on $\Risk(h)$ into Theorem~\ref{eq:bousquet_local}, we have the statement.
\qed\end{proof}

%% \begin{lemma}
%%   \label{lem:hyp_class_to_R}
%%   Let $\regul$ be a $\sigma$-strongly convex function w.r.t. a norm $\|\cdot\|$ and assume that $\regul^\star(\bzero) = 0$.
%%   Given any function $f : \sX \mapsto \sY$, a scalar $\alpha > 0$ and any sample $S$ drawn i.i.d. from distribution over $\sX \times \sY$, define the class,
%%   \begin{equation*}
%%     \sH = \left\{ \bx \mapsto \ip{\bw, \bx} + f(\bx) ~:~ \regul(\bw) \leq \alpha \Riskh(f) \right\}~,
%%   \end{equation*}
%%   and let $\sup_{\bx \in \sX} \|\bx\|_\star \leq B$.
%%   Then we have that,
%%   \[
%%   \Rad(\sH) \leq B \sqrt{\frac{2 \alpha \Risk(f)}{m \sigma}}~.
%%   \]
%% \end{lemma}
%% \begin{proof}
%%   First define
%%   \[
%%   \sH' = \left\{ \bx \mapsto \ip{\bw, \bx} ~:~ \regul(\bw) \leq \alpha \Riskh(f) \right\}~,
%%   \]
%%   and observe from definition of $\Radh$, that $\Radh(\sH) = \Radh(\sH')$~\cite[Lemma 26.6]{shalev2014understanding}.
%%   Next we apply Theorem 7 by~\citet{kakade2012regularization} with $f_{\text{max}} = \Riskh(f)$.
%%   Taking expectation w.r.t. sample on both sides and applying Jensen's inequality w.r.t. $\E[\cdot]$ gives the statement.
%% \end{proof}

%
\end{document}